\newcommand{\argmin}{\mathop{\mathrm{argmin}}\limits}
\newcommand*\circled[1]{\tikz[baseline=(char.base)]{
            \node[shape=circle,draw,inner sep=0.6pt] (char) {\tiny #1};}}
\newtheorem{thm}{Theorem}[section]
\newtheorem{cor}[thm]{Corollary}
\newtheorem{prop}{Proposition}[section]
\newtheorem{conjecture_}[thm]{Conjecture}
\newtheorem{definition_}{Definition}[section]
\newtheorem*{remark_}{Remark}
\pgfplotsset{compat=1.5} 
\newcommand{\executeiffilenewer}[3]{%
\ifnum\pdfstrcmp{\pdffilemoddate{#1}}%
{\pdffilemoddate{#2}}>0%
{\immediate\write18{#3}}\fi%
}
\newcommand{%
\executeiffilenewer{.svg}{.pdf}%
{inkscape -z -D --file=.svg %
--export-pdf=.pdf --export-latex}%
\input{.pdf_tex}%
}[1]{%
\executeiffilenewer{#1.svg}{#1.pdf}%
{inkscape -z -D --file=#1.svg %
--export-pdf=#1.pdf --export-latex}%
\input{#1.pdf_tex}%
}
\icmltitlerunning{Gated Compression Layers for Efficient Always-On Models}
\begin{document}
\twocolumn[
\icmltitle{Gated Compression Layers for Efficient Always-On Models}

\icmlsetsymbol{equal}{*}
\begin{icmlauthorlist}
\icmlauthor{Haiguang Li}{google}
\icmlauthor{Trausti Thormundsson}{google}
\icmlauthor{Ivan Poupyrev}{google}
\icmlauthor{Nicholas Gillian}{google}
\end{icmlauthorlist}

\icmlaffiliation{google}{Google LLC, Mountain View, CA 94043, USA}

\icmlcorrespondingauthor{Haiguang Li}{haiguang@google.com}
\icmlkeywords{Machine Learning, ICML}
\vskip 0.3in
]

\printAffiliationsAndNotice{}

\begin{abstract}
Mobile and embedded machine learning developers frequently have to compromise between two inferior on-device deployment strategies: sacrifice accuracy and aggressively shrink their models to run on dedicated low-power cores; or sacrifice battery by running larger models on more powerful compute cores such as neural processing units or the main application processor. In this paper, we propose a novel \emph{Gated Compression} layer that can be applied to transform existing neural network architectures into Gated Neural Networks. Gated Neural Networks have multiple properties that excel for on-device use cases that help significantly reduce power, boost accuracy, and take advantage of heterogeneous compute cores. We provide results across five public image and audio datasets that demonstrate the proposed Gated Compression layer effectively stops up to 96\% of negative samples, compresses 97\% of positive samples, while maintaining or improving model accuracy.
\vspace*{-8pt}
\end{abstract}
\section{Introduction}
\setlength{\tabcolsep}{9pt}
Advancements in lightweight architectures \citep{tan2019efficientnet}, on-device libraries \citep{david2021tensorflow}, and dedicated hardware accelerators have resulted in the ubiquitous deployment of machine learning models across millions of mobile, wearable, and smart devices. These advancements are powering a rapidly expanding category of on-device use-cases in \textbf{Always-On computing}. Always-On models are deployed today across millions of mobile devices, smart watches, fitness trackers, earbuds, smart doorbells, and beyond to enable use cases as broad as speaker detection; user authentication; activity recognition; noise reduction; fall detection; music classification; fault prevention; earthquake prediction; accident alerting; and more.

\begin{figure}[t!]
    \centering
    \includegraphics[width=0.48\textwidth]{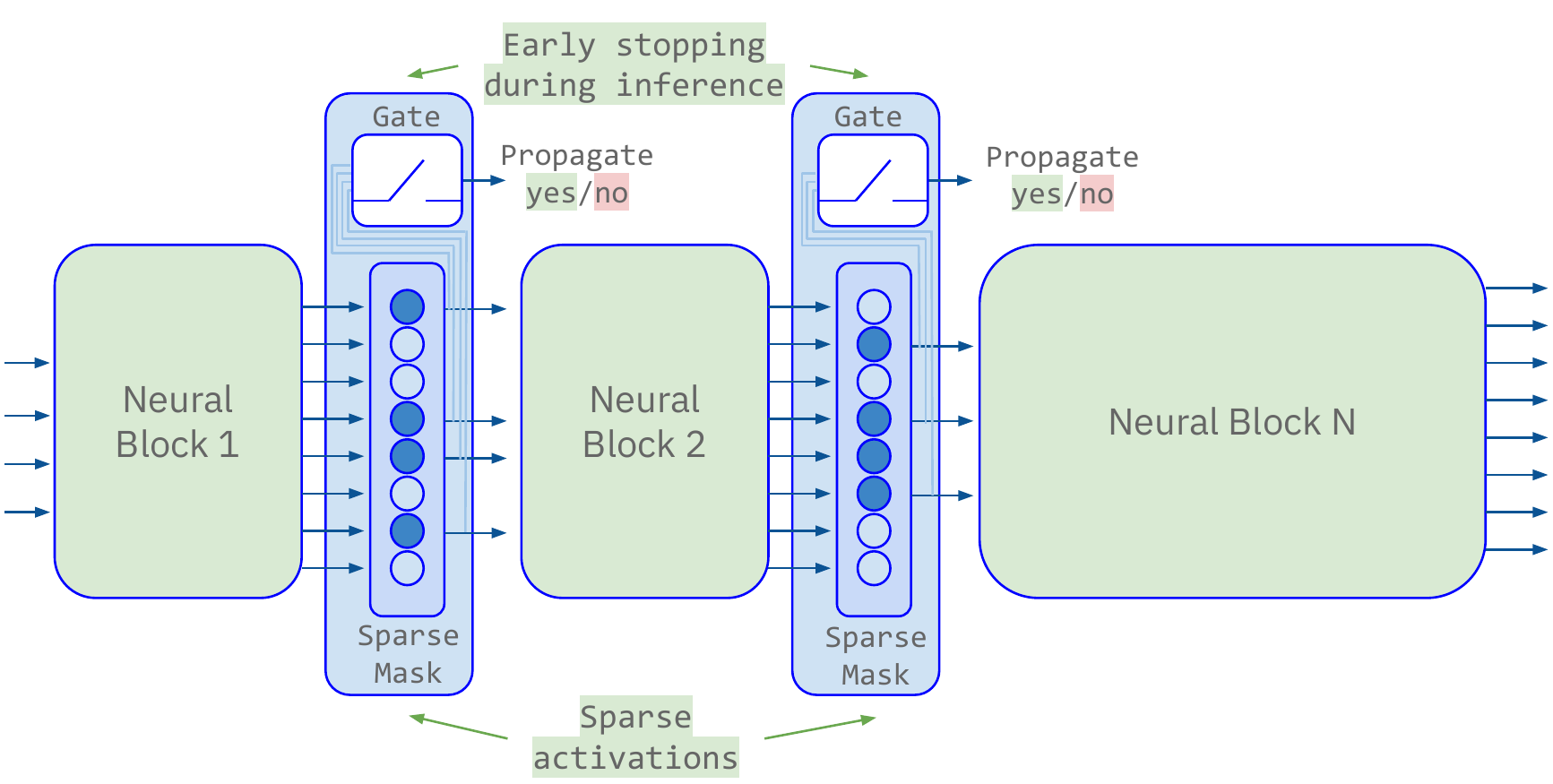}\vspace*{-10pt}
    \caption{The proposed Gated Compression layer. One or more layers can be added to existing architectures to transform any network into an efficient Always-On Gated Neural Network.}
    \label{fig:proposed_method}\vspace*{-12pt}
\end{figure}

Always-on models run continually, searching for potential signals of interest in a continuous stream of unsegmented sensor data.  We refer to signals of interest as \emph{positive} samples, with generic background data containing no signal of interest as \emph{negative} samples. In real-world data, positive samples can be sporadic, hidden in an overwhelming negative data stream. A user might speak a few keywords a day; perform an activity multiple times per week; while the majority of users ideally never experience rare events like serious accidents. This exposes the main challenge with Always-On models - they are \emph{always on}, continually searching for potential events; yet the events are \emph{sparse}.

While modern mobile devices and wearables contain dedicated heterogeneous hardware to support running lightweight models at low power, its not enough to support the orders of magnitude increase in diversity and complexity of future use cases. Consumers desire more extensive and helpful experiences from their devices, while simultaneously expecting longer battery life and reduced climate impact. To fundamentally address this problem we need to think differently. We need machine learning techniques that enable us to move from an Always-On paradigm, to context aware models that only run when needed. Moreover, as model size continues to grow, we need options that enable larger context-aware models to be efficiently distributed across the \emph{multiple} heterogeneous computing cores that are available in today's modern devices (e.g. Always-On accelerators, DSPs, Neural cores, ...). This enables much larger models to be run that can fit on any one processor, with the front-end of the model to run on extremely low-power Always-On accelerators to efficiently detect potential signals of interest and later stages of the model running on more compute-intensive processors only being triggered when appropriate.

\begin{figure*}[t]
	\centering\includegraphics[width=1.01\textwidth]{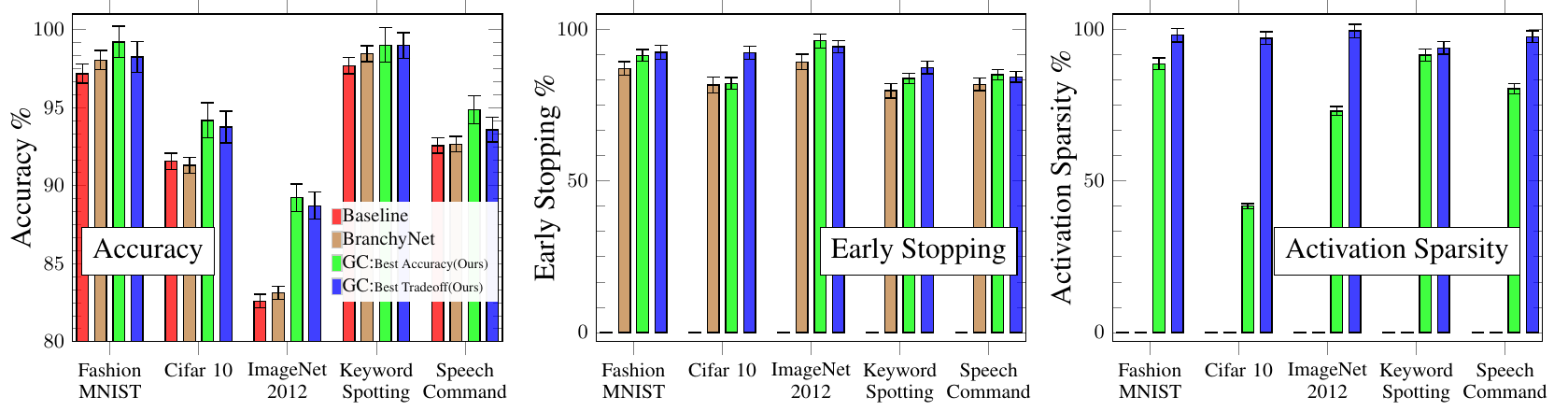}
	\caption{The proposed GC layer and two baseline techniques applied to five image and audio benchmark datasets. The GC $\alpha$, $\beta$ parameters enable models to be tuned to tradeoff accuracy vs early stopping vs compression based on the target use case requirements.}
	\label{fig:overall_plot}
	\vspace{-8pt}
\end{figure*}

In this paper, we present a novel \emph{\underline{G}ated \underline{C}ompression} (GC) layer that can be applied to existing deep neural network architectures to transform any standard network into a \textbf{Gated Neural Network}. GC layers have the following important properties:

\begin{itemize}[noitemsep,topsep=0pt,leftmargin=*]
\item \textbf{Early Stopping}: GC layers provide early stopping during \textit{inference}. GC layers can be strategically placed throughout a network to immediately stop propagation of data when there is no signal of interest, significantly reducing compute and power. GC layers are jointly optimized during training to maximize early stopping, while not degrading accuracy or other key metrics;
\item \textbf{Activation Sparsity}: GC layers automatically learn to compress intermediate feature data via activation sparsity to minimize the positive data that is propagated through an active network. Reducing feature dimensionality is key in heterogeneous computing systems due to the high cost of data transfer;
\item \textbf{Distributed Models}: GC layers provide natural delimiters to enable large-scale models to be split and distributed over multiple compute islands on a single device, or potentially multiple devices and the cloud - while still running at low power due to early stopping and layer compression;
\item \textbf{Holistic Optimization}: GC layers allow joint optimization of true positive detections while minimizing false positives errors. Multiple GC layers can be added to a single network if needed, which we show increases true positive detections while suppressing false positives.
\end{itemize}

The contributions of this paper are as follows:
\begin{itemize}[noitemsep,topsep=0pt,leftmargin=*]
\item We propose a new \emph{Gated Compression layer} and present an \emph{effective loss function that enables on-device models to be explicitly fine-tuned to specify the importance of early stopping, activation sparsity to reduce data transfer through the model, and boost overall model performance};
\item We show how \emph{combing the gating and compression components in one layer improves all metrics over independent gating or compression}.
\item Furthermore, we show the performance impact of both the position and number of GC layers with a number of common model architectures finding that \emph{gating and compression can be improved with multiple gates cascaded through a network};
\item We demonstrate through extensive experiments across 5 public image and audio datasets that various deep neural network models can be extended with \emph{Gated Compression layers to achieve up to 96\% early stopping while also boosting accuracy} (see Figure \ref{fig:overall_plot}).
\end{itemize}

\section{GC Layers for Always-On Models}

For Always-On use cases involving ML models on low-power compute cores, by reducing the data transmission and computation needs, we can improve power efficiency, battery life and resource utilization while maintaining or improving accuracy. 
We present three core ideas in this paper to improve the efficiency of Always-On models:
(i) \textit{Gated/Early Stopping}: minimizing the number of negative samples propagated through a network by adding gates to stop unnecessary data transmission and computation;
(ii) \textit{Activation Compression}: minimizing the amount of positive samples propagated through a network at key bottleneck layers; (iii) \textit{Distributed Models}: distributing a larger high-performance model over multiple heterogeneous compute stages across one or even multiple devices.

We first outline the framework to facilitate the introduction of the core ideas.
A deep neural network, which consists of a chain of layers that are processed sequentially, can be split into smaller networks.
The same results can be obtained by invoking the smaller networks sequentially. 
More specifically, a network $\mathcal{F}$ can be split into $k$ disjoint smaller networks 
$  \mathcal{F}=\{\mathcal{F}^{i}\}_{i=1}^{k},$
such that $\mathcal{F}^{i}$ consumes the output of $\mathcal{F}^{i-1}$ and produces the input for $\mathcal{F}^{i+1}$. The input $x_i$ and output $y_i$ of the $i$-th smaller network $\mathcal{F}^{i}$ can be written as:
\begin{equation*}
\begin{matrix}
x_{i} &=&\left\{\begin{matrix}
    x & i=1\\ 
    \mathcal{F}^{1\mapsto i-1}\left(x\right) & i>1
    \end{matrix}\right.\\[6pt]
y_{i} &=&\mathcal{F}^{i}\left(x_i\right) \hspace{8pt}= \hspace{8pt}\mathcal{F}^{1 \mapsto i}\left(x\right)
\end{matrix},
\end{equation*}
\newpage
\begin{equation*}
\text{where\ \ \ } \mathcal{F}^{i \mapsto j}\left(\cdot\right)=
 \mathcal{F}^{j}\left( \mathcal{F}^{j-1}\left(\cdots \mathcal{F}^{i}\left(\cdot\right) \cdots \right )\right).
\end{equation*}

\subsection{Gated/Early Stopping}\label{gate_subsection}

In Always-On use cases, where the data stream is often dominated by negative samples, it is more efficient to early stop the transmission and computation of negative samples rather than processing them end-to-end. 
By early stopping, the data transmission and computations on later smaller networks can be skipped without degrading the performance.

Similar to the branch exit in BranchyNet \citep{teerapittayanon2016branchynet}, a gate $\mathcal{G}^{i}$ is designed to stop data transmission and computation in any subsequently smaller network $\mathcal{F}^{j},\forall j\in(i,k]$. 
The gate $\mathcal{G}^{i}$, which is a binary gate, can be trained together with $\{\mathcal{F}^{m}\}_{m=1}^i$ to minimize the gate loss:
\begin{equation}\label{gate_loss}
    \mathcal{L}_\text{gate}^i(x,y)=\mathcal{L}\left(\mathcal{G}^{i}\left(\mathcal{F}^{1\mapsto i}\left(x\right)\right), \Omega^{i}(y)\right),
\end{equation}
where $\mathcal{L}(\cdot,\cdot)$ is a loss function (e.g. cross entropy), and $\Omega^{i}(\cdot)$ is a  predefined, problem-specific class mapping function for determining interesting classes.

\subsection{Activation Compression}

In Always-On use cases, positive samples should always propagate through the network end-to-end. Two adjacent smaller networks are connected together and the internal feature/activation maps are transmitting through them. 
The data amount can be substantial and the data transmission may cross boundaries, such as processors or devices.
Thus, the data transmission can consume a significant amount of power, especially in Always-On scenarios. 

To reduce the amount of transmitted data for positive samples, this paper proposes a \textit{Compression} layer $\mathcal{C}_\varphi:\mathbb{R}^n\mapsto\mathbb{R}^n$ to learn data compression:
\begin{equation}\label{compression_droput_layer}
    y=x\circ \varphi ,
\end{equation}

where $x$, $\varphi$, and $y$ are the input, weight, and output, respectively. The notation ‘$\circ$’ represents element-wise product. As a linear transformation, adding this type of layer, the entire network can still be trained using end-to-end back-propagation. 

A \textit{Compression} layer $\mathcal{C}^{i}$, which acts as a bottleneck and a bridge between $\mathcal{F}^{i}$ and $\mathcal{F}^{i+1}$, can be trained together with $\{\mathcal{F}^{m}\}_{m=1}^k$ to minimize the compression loss:

\begin{align}\label{compression_loss}
\begin{split}
    \mathcal{L}_\text{comp}^i(x,y) = &\mathcal{L}\left(\mathcal{F}^{i+1\mapsto k}\left(\mathcal{C}^{i}\mathcal{F}^{1\mapsto i}\left(x\right)\right), y\right)+\\& \beta\mathcal{L}_\text{trans}(\varphi),
\end{split}\raisetag{16pt}
\end{align}
where $\mathcal{L}_\text{trans}(\cdot)$ is a sparsity regularization term (e.g. $\mathcal{L}_1$) applied to the weight matrix $\varphi$ to promote sparsity in the activation outputs.
Thus, the compression rate is controlled by the hyperparameter $\beta$, which enables fine-grained control over the compression.

Inspired by the binarized neural networks \citep{courbariaux2016binarized}, the weight matrix of the \textit{Compression} layer is binarized to reduce its parameter size for deploying.
The use of binary weights provides a natural way to represent which dimensions should be compressed (0) and which should be passed through (1).

\subsection{GC Layer}

A \textit{Gate} stops negative samples early, while a \textit{Compression} layer minimizes the amount of transmitted data for positive samples. It is intuitive to combine them into a signal layer, and insert into an existing network to gain the benefit of both. Additionally, our experimental results show this combination is practically beneficial, as it improves the performance of each other further.

A GC layer, a combination of the \textit{Gate} and \textit{Compression} layers, is proposed to minimize the amount of transmitted data and computation required.
Here, we assume that the \textit{Gate} $\mathcal{G}$ takes the output of the \textit{Compression} layer $\mathcal{C}$ as input. Theoretically, it can take other inputs as well. 
However, according to our experimental results, it is advantageous to take the output of $\mathcal{C}$: (a) the connection is within the GC layer, making the GC layer easier to integrate into an existing network; (b) the input is sparse, resulting in a smaller $\mathcal{G}$, (c) $\mathcal{G}$ performs better as all layers before it can be fine-tuned for better representation learning.

A GC layer, $\mathcal{GC}^{i}=\{\mathcal{G}^{i}, \mathcal{C}^{i}\}$, which acts as both a bridge and a gate between $\mathcal{F}^{i}$ and $\mathcal{F}^{i+1}$. There are various strategies to train $\mathcal{G}^{i}$ and $\mathcal{C}^{i}$, but our experiments indicate that training them together end-to-end with the entire network yields the best performance. Therefore, the GC layer is trained together with $\{\mathcal{F}^{m}\}_{m=1}^k$ for minimizing the $\mathcal{L}_\mathcal{GC}^i$ loss:

\begin{align*}
\begin{split}
    \mathcal{L}_\mathcal{GC}^i(x,y)=&\alpha\underbrace{\mathcal{L}\left(\mathcal{G}^{i}\left(\mathcal{C}^{i}\mathcal{F}^{1\mapsto i}\left(x\right)\right), \Omega^{i}(y)\right)}_\textup{\tiny\circled{1}Gate Loss}
    + \\ &\beta\underbrace{\mathcal{L}_\text{trans}(\varphi)}_\textup{\tiny\circled{2}Trans. Cost}
    +\\ &\eta\underbrace{\mathcal{L}\left(\mathcal{F}^{i+1\mapsto k}\left(\mathcal{C}^{i}\mathcal{F}^{1\mapsto i}\left(x\right)\right), y\right)}_\textup{\tiny\circled{3}Final Prediction Loss} 
\end{split},
\end{align*}

where $\alpha_i$ and $\beta_i$ control the early stopping and compression performance of the GC layer, respectively. The wieght $\eta$ is for the final prediction loss.

\textit{Implicit Pre-Training and Feature Selection.}
Besides enhancing power efficiency and resource utilization, our experiments also show that GC layers improve accuracy. This is due to the following:
(1) The \textit{Gate} guides the early layers towards a more favorable direction, similar to \textit{pre-training}, and enables for early stopping of negative samples, allowing later layers to focus on positive samples; (2) The \textit{Compression} layer discards irrelevant or partially relevant dimensions, similar to \textit{feature selection}; (3) The GC layer, which combines the \textit{Gate} and \textit{Compression} layers, enhances the network efficiency and regularizes it to prevent overfitting.

\subsection{Objective Function}

Let $(x, y)$ be a data pair drawn from a distribution $\mathbf{P}(\mathbb{X},\mathbb{Y})$, where $x\in\mathbb{X}$ is a sample in $\mathbb{R}^{n}$ and $y \in\mathbb{Y}$ is the label in $\mathbb{N}$. Given a set of data pairs $\mathbf{D}(\mathbb{X},\mathbb{Y})=\{(x_i,y_i)|(x_i,y_i)\sim\mathbf{P}(\mathbb{X},\mathbb{Y})\}_{i=0}^N$, the goal is to learn the parameters $\theta\in\Theta$ of a deep neural network, $\mathcal{F}_{\theta}: \mathbb{X}\mapsto\mathbb{Y}$, that predicts the label $y$ for $\forall x\in\mathbf{D}(\mathbb{X})$ by solving:

\begin{equation*} \label{cliassifcation}
\theta^{*}=\argmin_{\theta\in\Theta} \underbrace{\underset{\mathsmaller{\mathsmaller{(x,y)\in \mathbf{D}}}(\mathbb{X},\mathbb{Y})}{\mathlarger{\mathlarger{\mathlarger{\mathlarger{\mathlarger{\mathlarger{\mathbb{E}}}}}}}} \mathcal{L}\left(\mathcal{F}_{\theta}(x), y\right)}_\textup{\tiny Prediction loss} + \xi\underbrace{\gamma(\theta)}_\textup{\tiny Penalty},
\end{equation*}

where $\gamma(\cdot)$ is the penalty term aiming at controlling the size and structure of the network $\mathcal{F}_\theta$. The weight $\xi$ controls the strength of the penalty term.

An existing network can be split into a set of smaller networks, 
$\widetilde{\mathcal{F}}_{\psi}
    =\left\{\mathcal{F}^{i},\mathcal{GC}^{i}\right\}_{i=1}^{k-1}\cup\left\{\mathcal{F}^{k}\right\}$
, by adding GC layers.
The new network can be learned by solving:

\begin{align}\label{new_obj}
\begin{split}
\psi^{*}=
	&\argmin_{\psi\in\Psi}\underset{\mathsmaller{\mathsmaller{(x,y)\in \mathbf{D}}}(\mathbb{X},\mathbb{Y})}{\mathlarger{\mathlarger{\mathlarger{\mathlarger{\mathlarger{\mathlarger{\mathbb{E}}}}}}}}
\sum\limits_{i=1}^{k-1}\begin{bmatrix}\alpha_i\underbrace{\mathcal{L}_\text{gate}^i(x,y)}_\textup{\tiny\circled{1}Gate Loss}+\\\beta_i\underbrace{\mathcal{L}_\text{\tiny trans}^{i}({\varphi}_{\mathcal{C}}^{i})}_\textup{\tiny\circled{2}Trans. Cost}\end{bmatrix}
	 +\\&\eta\underbrace{\mathcal{L}\left(\widetilde{\mathcal{F}}^{1\mapsto k}(x), y\right)}_\textup{\tiny\circled{3}Final Prediction Loss}+\xi\underbrace{\gamma(\psi)}_\textup{\tiny\circled{4}Penalty},
\end{split}\raisetag{28pt}
\end{align}

\begin{align*}
\hspace{-12pt}\text{where \ \ \ }\mathcal{L}_\text{gate}^i(x,y) &= \mathcal{L}\left(\mathcal{G}^{i}\left(\mathcal{C}^{i}\mathcal{F}^{1\mapsto i}\left(x\right)\right), \Omega^{i}(y)\right), \\
\widetilde{\mathcal{F}}^{i\mapsto j}(x) &= \mathcal{C}^{j}\left(\mathcal{F}^j\widetilde{\mathcal{F}}^{i\mapsto j-1}(x)\right),\\
\eta &= 1-\sum_{i\in[1,k)}\alpha_{i}. 
\end{align*}

The `Gate Loss' and `Final Prediction Loss' are derived from classification error. To ensure balance in the classification loss, the weights ($\alpha_i, \forall i \in [1,k)$ and $\eta$) are normalized so that their sum is equal to 1. The `Trans. Cost' and `Penalty' are regularization terms for the network structure, there are no strict constraints on their weights ($\beta_i$ and $\xi$).

The $\mathcal{L}_\text{gate}$ terms stop negative samples early, the $\mathcal{L}_\text{trans}$ terms minimize the amount of transmitted data for positive samples, and the `Final Prediction Loss' term ensures the overall model performance. By training on all these terms together, the new network $\widetilde{\mathcal{F}}_{\psi}$ is optimized for better model performance and better power efficiency, battery life and resource utilization. 

\subsection{Distributed Model with GC Layers}

For Always-On use cases, normally there are multiple heterogeneous compute islands available. For example, a sensor is connected to a microcontroller, then a sensor hub, a mobile device, and finally even the cloud.

GC layers can spit an existing network into smaller networks. This enables the execution of $\mathcal{F}^{i}$ on the $i$-th compute island, which in turn allows for the full utilization of all available resources. As a result, a larger and more powerful network can be built for better performance.

In the distributed scenario, two adjacent smaller networks $\mathcal{F}^{i}$ and $\mathcal{F}^{i + 1}$ are running on different compute islands, and $\mathcal{F}^{i+1}$ takes the output of $\mathcal{F}^{i}$ as its input: 
$x_{i+1}\xleftarrow[\textsf{\tiny hardware boundaries}]{\textsf{\tiny Transmit across}} y_i.$
Since this data transmission crosses physical boundaries (e.g., device-to-device communication over Bluetooth or WiFi), it consumes a significant amount of power. This power consumption is amplified significantly for always-on use cases.

GC layers can create early exits and bottlenecks, which reduce the amount of data that needs to be transmitted across boundaries and decrease power consumption.

\begin{table*}[tb]
\centering
\caption{\small Detailed Results of the Proposed GC Layer on Various Datasets.}\vspace{6pt}
\label{tab:overall}
\resizebox{0.995\textwidth}{!}{
\begin{tabular}{l||llllllll}
\hline\hline
& Dataset & Architecture & Method & $\alpha$ & $\beta$  & Accuracy& {Early Stopping} & {Activation Sparsity} \\\hline\hline
\multirow{15}{*}{\shortstack[c]{Image}} & \multirow{5}{*}{\shortstack[c]{Fashion\\MNIST}} & \multirow{5}{*}{\shortstack[c]{ResNet}} &Baseline          & $\times$    & $\times$ & $0.97183\pm0.00145$ & $0\%$ & $0\%$\\
&&& BranchyNet & $\times$ & $\times$ & $0.98051\pm0.00623$& $87.03\%$ & $0\%$ \\
&&& With $\mathcal{G}$ only & $0.50$ & $\times$ & $0.98108\pm0.00182$& $93.36\%$ & $0\%$ \\
&&& With $\mathcal{C}$ only & $\times$    & $0.55$     & $0.97998\pm0.01406$& $0\%$ & $98.07\%$  \\
&&& With $\mathcal{GC}$:{\tiny Best Trade-off}    & $0.50$ & $0.55$  & $0.98255\pm0.00160$   & $92.35\%$ & $98.02\%$  \\
&&& With $\mathcal{GC}$:{\tiny Best Accuracy}     & $0.05$ & $0.10$  & $\boldsymbol{0.99219\pm0.00091}$   & $91.37\%$ & $88.61\%$  \\\cline{2-9}
&\multirow{5}{*}{Cifar10} & \multirow{5}{*}{ResNet} & Baseline           & $\times$    & $\times$ & $0.90235\pm0.00920$  & $0\%$ & $0\%$ \\
&&& BranchyNet & $\times$ & $\times$ & $0.91312\pm0.00508$& $81.57\%$ & $0\%$ \\
&&& With $\mathcal{G}$ only & $0.55$ & $\times$ & $0.92254\pm0.00216$  & $87.5\%$ & $0\%$ \\
&&& With $\mathcal{C}$ only & $\times$    & $0.05$     & $0.93962\pm0.00586$  &$0\%$ & $42.10\%$  \\
&&& With $\mathcal{GC}$:{\tiny Best Trade-off}     & $0.70$ & $0.8$     & $0.93766\pm0.06124$  &$92.23\%$ & $97.07\%$ \\
&&& With $\mathcal{GC}$:{\tiny Best Accuracy}      & $0.60$ & $0.15$     & $\boldsymbol{0.94187\pm0.00244}$  &$82.17\%$ & $41.64\%$\\\cline{2-9}

&\multirow{5}{*}{\shortstack[c]{ImageNet\\2012}} &\multirow{5}{*}{\shortstack[c]{ResNeXt}} &Baseline           & $\times$    & $\times$ & $0.82591\pm0.01085$ & $0\%$ & $0\%$ \\
&&& BranchyNet & $\times$ & $\times$ & $0.83130\pm0.00434$& $89.23\%$ & $0\%$ \\
&&&With $\mathcal{G}$ only & $0.25$ & $\times$ & $0.84316\pm0.01262$ & $95.66\%$ & $0\%$ \\
&&&With $\mathcal{C}$ only & $\times$    & $0.35$     & $0.84095\pm0.02134$ & $0\%$ & $97.84\%$ \\
&&&With $\mathcal{GC}$:{\tiny Best Trade-off}      & $0.35$ & $0.15$  & ${0.88715\pm0.02100}$  & $94.26\%$ & $94.38\%$    \\
&&&With $\mathcal{GC}$:{\tiny Best Accuracy}     & $0.45$ & $0.05$     & $\boldsymbol{0.89221\pm0.01844}$ & $96.17\%$ & $73.03\%$ \\\hline\hline

\multirow{10}{*}{\shortstack[c]{Audio}} &\multirow{5}{*}{\shortstack[c]{Keyword\\Spotting}} &\multirow{5}{*}{\shortstack[c]{ResNet}} &
Baseline           & $\times$    & $\times$ & $0.97695\pm0.00218$ & $0\%$ & $0\%$\\
&&& BranchyNet & $\times$ & $\times$ & $0.98462\pm0.00509$& $79.73\%$ & $0\%$ \\
&&&With $\mathcal{G}$ only & $0.45$ & $\times$ & $0.98875\pm0.00124$ & $91.56\%$ & $0\%$  \\
&&&With $\mathcal{C}$ only & $\times$    & $0.20$     & $0.98742\pm0.00278$ & $0\%$ & $97.77\%$  \\
&&&With $\mathcal{GC}$:{\tiny Best Trade-off}     & $0.45$ & $0.20$     & $0.98988\pm0.00133$ & $87.36\%$ & $93.84\%$  \\
&&&With $\mathcal{GC}$:{\tiny Best Accuracy}      & $0.30$ & $0.10$     & $\boldsymbol{0.99018\pm0.00164}$ & $83.72\%$ & $91.46\%$  \\\cline{2-9}
&\multirow{5}{*}{\shortstack[c]{Speech\\Command}} &\multirow{5}{*}{\shortstack[c]{Inception}} &
Baseline           & $\times$    & $\times$ & $0.92527\pm0.00876$ & $0\%$ & $0\%$\\
&&& BranchyNet & $\times$ & $\times$ & $0.92662\pm0.00489$& $81.83\%$ & $0\%$ \\
&&&With $\mathcal{G}$ only & $0.05$ & $\times$ & $0.94111\pm0.00932$  & $94.26\%$ & $0\%$ \\
&&&With $\mathcal{C}$ only & $\times$    & $0.35$     & $0.93164\pm0.00394$  & $0\%$ & $99.03\%$ \\
&&&With $\mathcal{GC}$:{\tiny Best Trade-off}     & $0.05$ & $0.35$     & $0.93590\pm0.00411$  & $84.31\%$ & $97.53\%$ \\
&&&With $\mathcal{GC}$:{\tiny Best Accuracy}      & $0.25$ & $0.10$     & $\boldsymbol{0.94879\pm0.00181}$  & $85.03\%$ & $80.36\%$ \\\hline\hline

\end{tabular}}
\end{table*}
Overall, GC layers can be used to optimize existing networks for Always-On use cases, by reducing power consumption, boosting accuracy, and utilizing heterogeneous compute islands. These optimizations are obtained by creating early exits and bottlenecks, which minimize the amount of computation and data transfer required, allowing the network to run more efficiently in Always-On scenarios.

\section{Experiments}

We apply GC layers to Always-On scenarios across both image and audio classification tasks. In this section, we first describe the datasets, evaluation protocols, and implementation details used to train and test each model, then discuss results. We then ablate key components of the GC layer and discuss key insights.

\subsection{Datasets, Architectures, and Evaluation Protocols}

Despite the ubiquitous nature of Always-On computing in today's consumer devices, there are limited public datasets that represent the true distribution of positive vs negative samples found in real-world use cases. Thankfully, common machine learning datasets such as ImageNet 2012 \citep{ILSVRC15} can be transformed into Always-On benchmark dataset by mapping a subset of labels to a generic negative class as a proxy for real-world background data.

For our experiments, we selected three common public image datasets: Fashion MNIST \citep{fashion_mnist}, Cifar10 \citep{cifar10}, and ImageNet 2012 \citep{ILSVRC15}; and two public audio datasets: Keyword Spotting V2 \citep{leroy2019federated} and Speech Command \citep{speechcommandsv2}. For each image dataset, we map every-other class to a generic background class to reflect an Always-On use case. All examples with even class labels are kept as positive examples, with all examples with odd class labels being mapped to a generic negative class. For example, in Cifar10 the even classes (airplane, bird, cat, deer, frog, ship) are unmodified while the odd classes (automobile, cat, dog, horse, truck) are grouped into a generic negative class. This simulates a 1:1 ratio between positive and negative samples. The audio datasets contain a pre-existing background class with a 1:9 ratio between positive and negative samples and required no additional class-label remapping. Note that real-world use cases can have a significant imbalance weighted towards negative samples, which only amplifies the need for techniques that support early stopping in Always-On models.

We use the following reference architectures for each dataset to demonstrate that GC layers can be applied to common model architectures (Figure \ref{fig:model_arch}): ResNets \citep{resnet2016} (Fashion MNIST, Cifar10, Keyword Spotting); ResNeXt \citep{xie2017aggregated_resnext} (ImageNet 2012); and Inception \citep{inception2015} (Speech Command).

We evaluate the accuracy vs early-stopping performance of architectures expanded with GC layers. Early stopping is defined as the percentage of negative test examples that are successfully gated by the model without having to propagate to the final classification layer.

We compare architectures expanded with GC layers against two baseline architectures: a baseline architecture with no gating and a baseline architecture that employs the popular BranchyNet gating technique   \citep{teerapittayanon2016branchynet}. For each dataset, the baseline architecture, BranchyNet architecture, and GC architecture are identical with the exception of the additional BranchyNet or GC layer. We first report the results of a single gate placed at 40\% within the depth of each network on the various image and audio test datasets. We then explore the impact of the position and number of gates, along with $\alpha, \beta$ tuning to achieve the best trade-off vs best-accuracy in the ablation studies.

\begin{figure}[t!]
	\centering\includegraphics[width=0.485\textwidth]{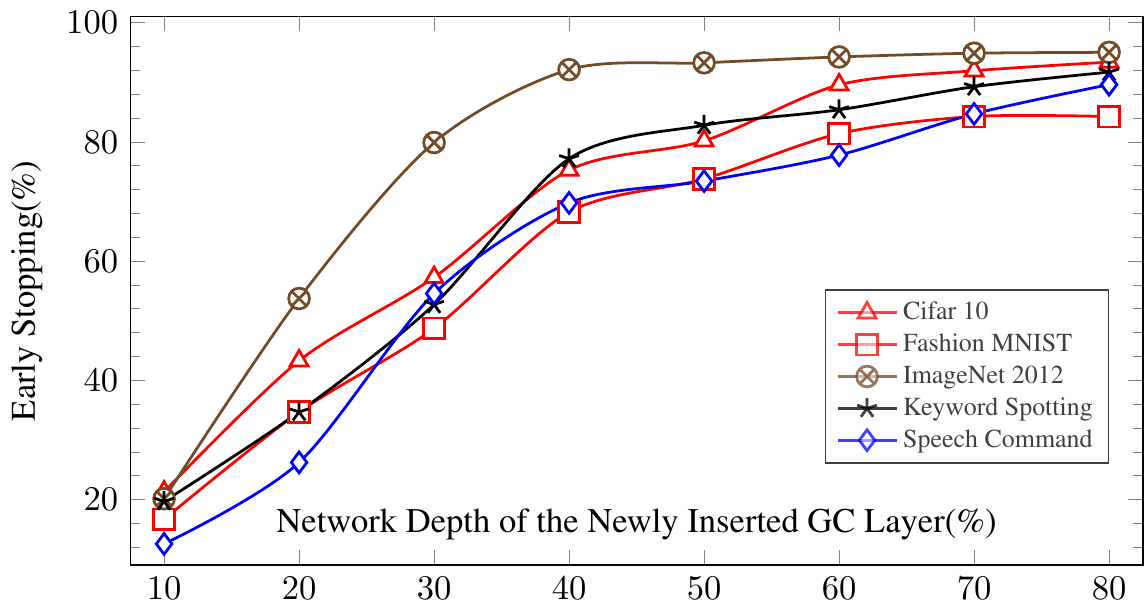}\caption{The Effect of Changing the Network Depth of A Newly Added GC Layer on Early Stopping Performance. 
	}\label{fig:early_stopping}
\end{figure}

\subsection{Implementation Details}

All methods are implemented with TensorFlow 2.x \citep{tensorflow2015-whitepaper}. Unless otherwise specified, the batch size is set to 512 and the training epoch is set to 200; the Adam \citep{adom2015} optimizer with a fixed learning rate (0.01) is used for model training. As the larger ResNeXt-101 64x4d \citep{xie2017aggregated_resnext} was used for the ImageNet dataset, a larger batch size of 1536, training epoch of 100, and learning rate of 0.006 was used to reduce training time.

For prepossessing, the audio signals are converted into Mel-frequency cepstral coefficients, prior to input to the audio ResNet or Inception models. All experiments are repeated 10 times with the mean and variance results reported.

\subsection{Accuracy vs Early Stopping Results}\label{sec:primary_experiment}

Figure \ref{fig:overall_plot} shows the results comparing the baseline models, BranchyNet models, and GC models across the image and audio datasets. Note that  the `GC:{\tiny Best Accuracy}' models consistently achieve the highest accuracy with competitive early stopping performance. 
On the other hand, the `GC:{\tiny Best Tradeoff}' models achieve improved accuracy over both BranchyNet and the reference baseline across all datasets, ranging from an improvement of 1.06 percentage points (Fashion MNIST) to 6.12 percentage points (ImageNet) over the baseline architecture.
Furthermore, they also have significantly improved early stopping performance over BranchyNet, with a range of 82.17\% (Cifar 10) to 96.17\% (ImageNet). 
Additionally, the GC models provide additional compression on the layer activation of the GC layer, reducing the feature dimensionality of any data that is transmitted to the next stage of the model, ranging from 41.64\% (Cifar 10) to 97.53\% (Speech Command).

The results in Figure \ref{fig:overall_plot} demonstrate that our GC layer can effectively identify and stop negative samples early, while simultaneously boosting model accuracy.
\begin{figure}[t!]
	\centering\includegraphics[width=0.485\textwidth]{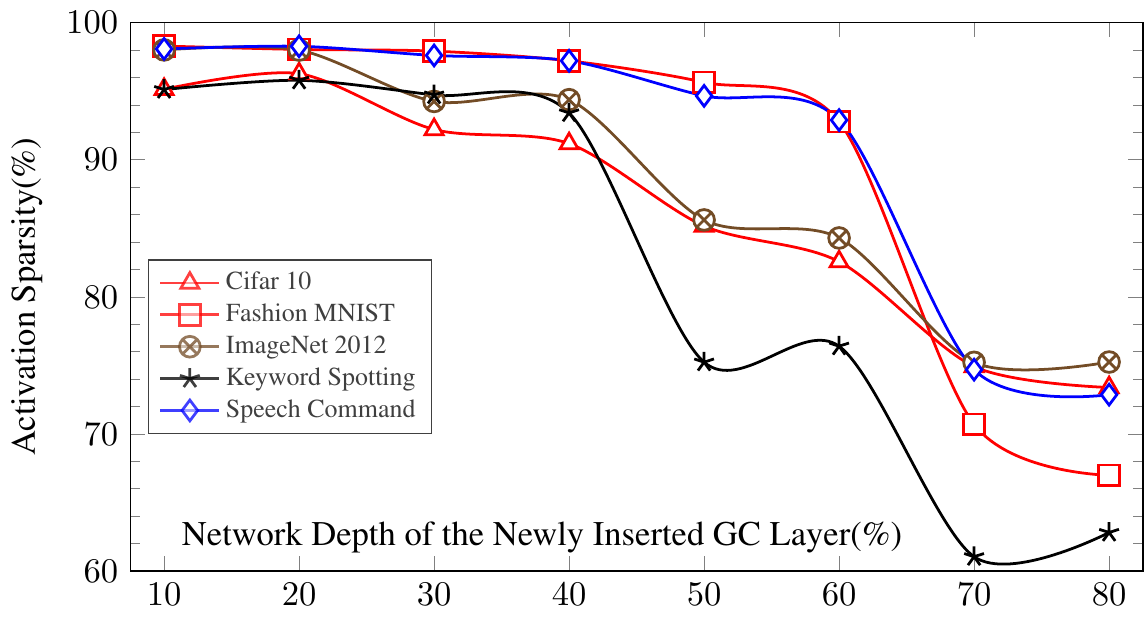}\caption{The Effect of Changing the Network Depth of A Newly Added GC Layer on Activation Sparsity.}\label{fig:compression}

\end{figure}
\begin{figure}[t!]
	\centering \centering\includegraphics[width=0.495\textwidth]{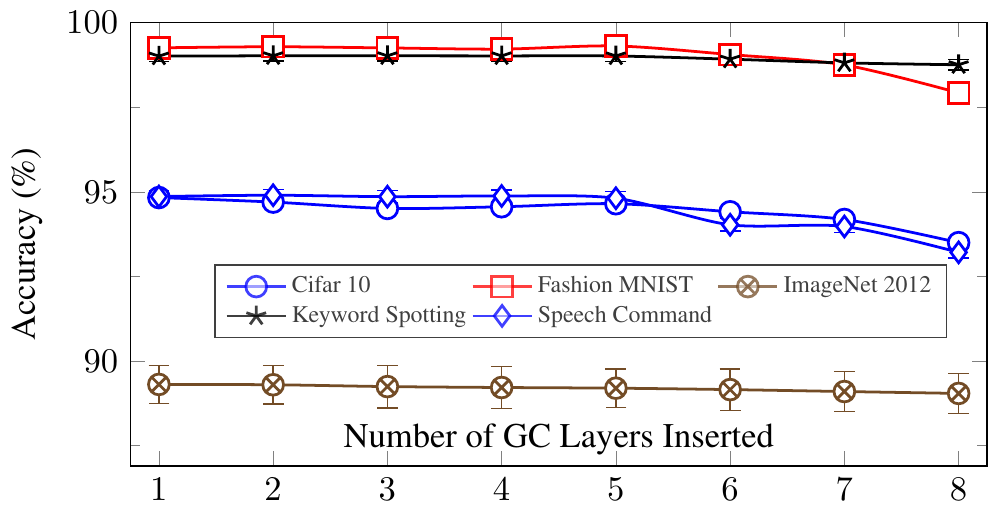}\caption{The Effect of Changing Number of GC Layers Inserted on Model Accuracy.}\label{fig:accuracy_gates}
\end{figure}
\begin{figure*}[tb!]
\captionsetup[subfigure]{labelformat=empty}

\begin{subfigure}[t]{\textwidth}
	\includegraphics[width=0.327\textwidth]{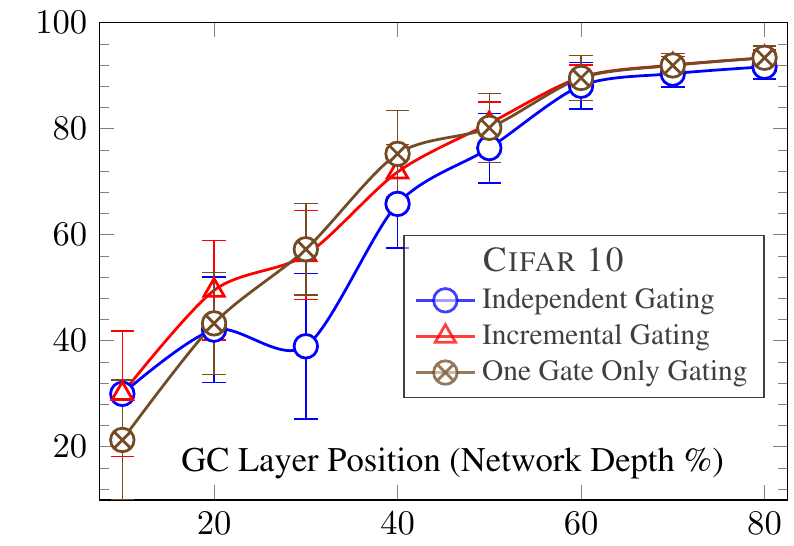}
	\includegraphics[width=0.327\textwidth]{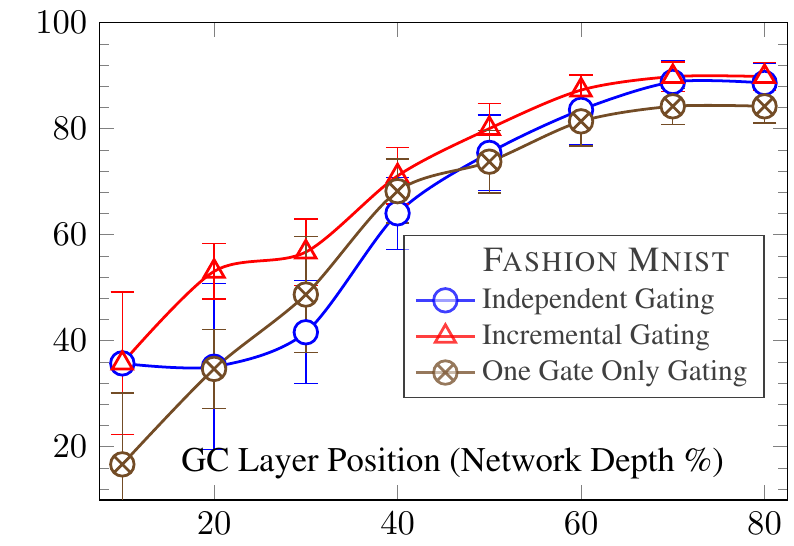}
	\includegraphics[width=0.327\textwidth]{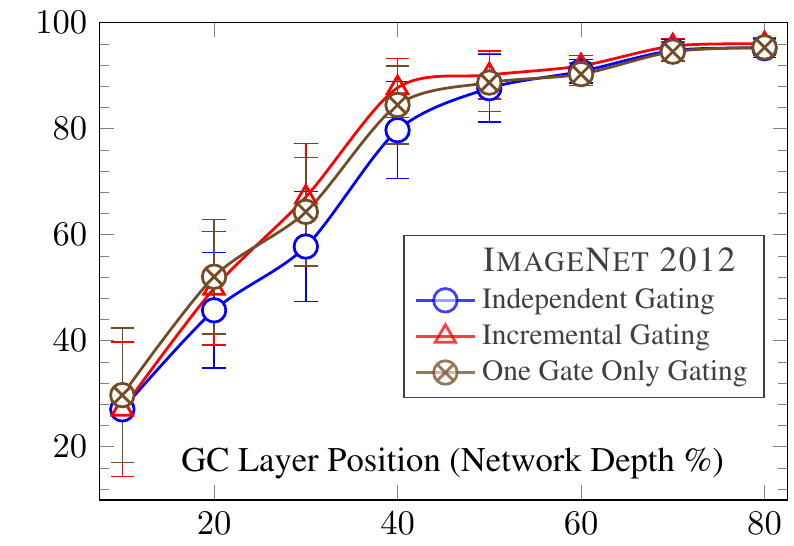}
\end{subfigure}
\begin{subfigure}[t]{\textwidth}
	\includegraphics[width=0.327\textwidth]{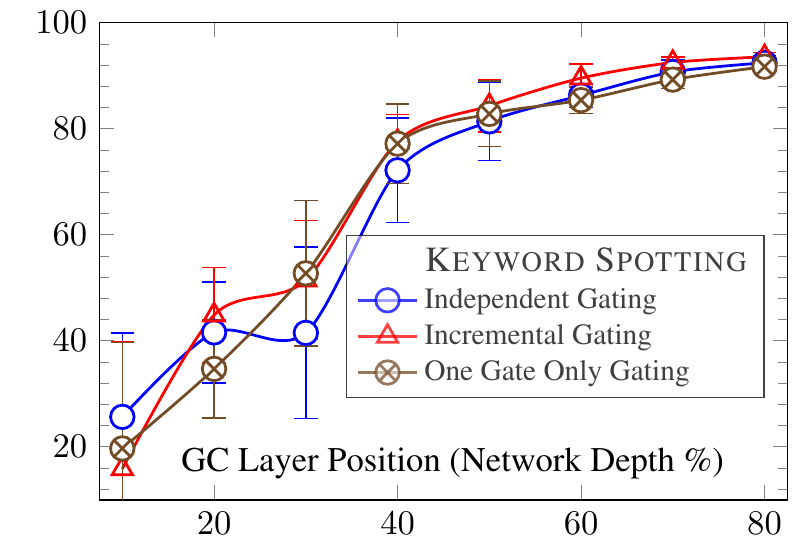}
	\includegraphics[width=0.327\textwidth]{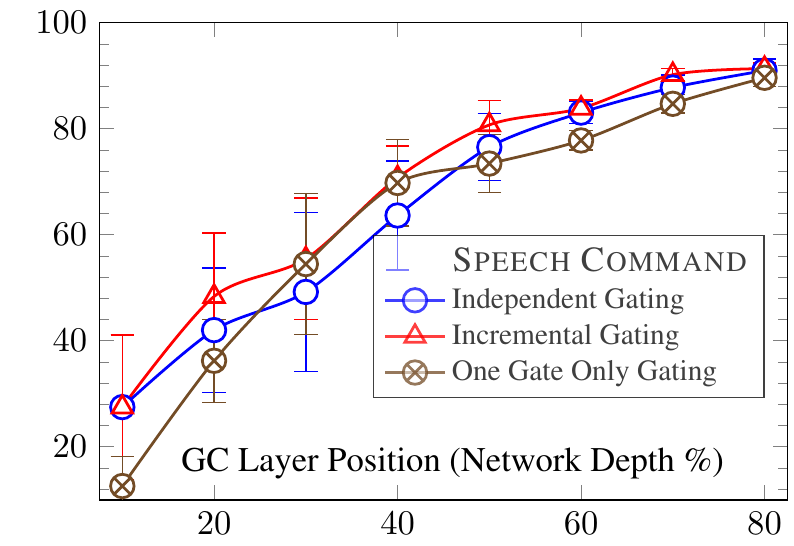}
\end{subfigure}
\caption{The Effect of Adding Multiple GC Layers on Early Stopping (\%). }\label{fig:multiple_gates_early_stopping}
\end{figure*}

\subsection{Best Trade-off vs Best Accuracy}

On-device models need to carefully balance key metrics, such as accuracy, against critical factors, such as power usage or memory constraints. To reflect this real-world prioritization, we use the GC $\alpha$ and $\beta$ parameters to train two variants of models, one weighted towards achieving the best-possible accuracy (GC:{\tiny Best Accuracy}) the second weighted towards achieving the best-possible accuracy-vs-early-stopping compromise (GC:{\tiny Best Tradeoff}).

The results in Table \ref{tab:overall} indicate that adding GC layers improves both accuracy and early stopping/compression performance. Specifically, `GC:{\tiny Best Accuracy}' models consistently achieve the highest accuracy, while `GC:{\tiny Best Trade-off}' models always obtain the best activation sparsity by balancing accuracy with more aggressive early stopping and compression. 
Specifically, the `GC:{\tiny Best Tradeoff}' models achieve activation sparsity ranging from 93.84\% (Keyword Spotting) to 97.53\% (Speech Command), while the `GC:{\tiny Best Accuracy}' models achieve activation sparsity ranging from 41.64\% (Cifar 10) to 91.46\% (Keyword Spotting).
Additionally, in either case, GC models consistently outperform the baseline and BranchyNet reference models in \emph{both} accuracy and early stopping.

Overall, the results in Table \ref{tab:overall} demonstrate that the GC layer can help improve the model accuracy further, while providing the benefits of early stopping and compression. Furthermore, depending on the use case's requirements, the GC layer can be configured with its $\alpha$ and $\beta$ parameters to prioritize early stopping and/or compression, while maintaining a high level of accuracy with a slight decrease if required. 

\subsection{Impact of GC Layer Position}

Section \ref{sec:primary_experiment} demonstrates significant early stopping and activation compression when placing a single GC layer 40\% deep within each model architecture. In this experiment, we evaluate the impact of the position of a single GC layer within the network.

Figure \ref{fig:early_stopping} shows adding a single GC layer in a network can effectively early stop negative samples, and the early stopping performance improves as the GC layer is moved to a deeper position: 
Inserting a GC layer at 10\% network depth early stops $10\sim40\%$ of negative samples, and positioning it at the 40\% depth improves early stopping to $70\sim90\%$.
This is because placing at a deeper position allows for more layers before it to be fine-tuned for better early stopping performance.

Figure \ref{fig:compression} shows that as the GC layer is moved to deeper positions, activation sparsity decreases. Placement at 40\% depth achieves a high activation sparsity of $90\sim98\%$, but at 80\% depth results in a lower activation sparsity of $60\sim78\%$.
This is because, as the GC layer is placed in shallower positions, it can compress more dimensions due to the larger internal feature map size.

Overall, Figures \ref{fig:early_stopping} and \ref{fig:compression} demonstrate that a single GC layer can provide early stopping and compression benefits when inserted at various depths. Specifically, as the GC layer is placed deeper in the network, the early stopping performance improves while the compression performance declines.
This insight can inform the positioning when inserting a GC layer based on the use case's requirements.

\subsection{Impact of Multiple GC Layers}

To evaluate the effect of increasing the number of GC layers on the models' performance, a set of experiments have been performed by inserting different numbers of GC layers into the existing baseline networks.
To distribute the GC layers evenly, a balanced approach is chosen to select their positions. For example, to insert 4 GC layers, they are placed at the 20\%, 40\%, 60\%, and 80\% network depths.

The results in Figure \ref{fig:accuracy_gates} show that 
the accuracy remains almost unchanged when the number of GC layers is less than 5. However, when the number of GC layers exceeds 5, there is a noticeable decrease in accuracy of less than 2 percentage points for the Speech Command, Fashion MNIST, and Cifar 10 datasets.


To evaluate the early stopping performance with multiple GC Layers, three methods are used for comparison: (1) Independent Gating: 8 GC layers are inserted into the baseline model for training, only the gate in the GC layer at the specified position is active during inference; (2) Incremental Gating: 8 GC layers are inserted into the baseline model for training, gates in GC layers after the specified position are disabled during inference; (3) One Gate Only Gating: Only one GC layer is inserted into the baseline model at the specified position for training and inference.

The results in Figure \ref{fig:multiple_gates_early_stopping} show that using incremental gating with multiple GC layers improves the early stopping performance compared to using `independent gating' and `one gate only gating'.

Overall, the results in Figures \ref{fig:accuracy_gates} and \ref{fig:multiple_gates_early_stopping} demonstrate that inserting multiple GC layers into a single network can provide various benefits:
(1) Improved Early Stopping: multiple GC layers can incrementally stop more negative samples;
(2) Multi-Stage Activation Compression: the amount of transmitted data can be reduced further by compressing with GC layers at different positions;
(3) Flexibility: the position and number of GC layers can be adjusted based on the use case's requirements.

\vspace{3pt}
\section{Related Work}
\vspace{3pt}
Deep neural networks have shown superior performance in many computer vision and natural language processing tasks. Recently, an emerging amount of work is applying deep neural networks on resource constrained edge devices \citep{dhar2019device}.

\textit{Model compression} is a popular approach for resource efficiency. The model size is compressed via techniques such as network pruning, vector quantization, distillation, hashing, network projection, and binarization \citep{Alperen_pruning_2022,liu2020pruning, wang2019haq, ravi2017projectionnet, courbariaux2016binarized, Distilling2015, han2015deep, chen2015compressing,  gong2014compressing}.

By reducing weights and connections, a lot of \textit{light weight architectures} were proposed for edge devices: MobileNets v1 \citep{howard2017mobilenets}, v2 \citep{sandler2018mobilenetv2} and v3 \citep{howard2019searching}, SqueezeNet \citep{iandola2016squeezenet} and SqueezeNext \citep{gholami2018squeezenext}, ShuffleNet \citep{zhang2018shufflenet}, CondenseNet \citep{huang2018condensenet}, and the NAS generated MnsaNet \citep{tan2019mnasnet}. These new lightweight architectures reduce model size and resource requirements while retaining fairly good accuracy. 

\textit{Quantization} reduces model complexity by using lower or mixed precision data representation. There are huge amount of emerging studies exploring 16-bit or lower precision for some or all of numerical values without much degradation in the model accuracy \citep{cambier2020shifted, guo2018survey,micikevicius2017mixed, judd2015reduced, wang2018training}.

Encouraging \textit{sparse structure} of the model architecture is able to reduce model complexity. The group lasso regularization \citep{feng2015learning, lebedev2016fast, wen2016learning} and learnable dropout techniques \citep{boluki2020learnable, molchanov2017variational} are efficient ways to encourage sparse structures in various deep neural network components and weights.

To utilize resources across hardware boundaries, \textit{distributed deployment techniques} have been proposed \citep{teerapittayanon2017distributed, mcmahan2017communication, teerapittayanon2016branchynet, tsianos2012consensus, ouyang2017chained,Gormez_2022,Alperen_class_2022,kaya2019shallow} to deploy deep neural network over multiple compute islands.

Our work is related to both distributed deployment and sparse structure. For better model performance, the proposed GC layer can distribute a single model across heterogeneous compute islands to fully utilize all resources available. To reduce the data transmission and computation needs, the GC layer allows early stopping for data with no signal of interest and minimizes the amount of transmitted data for other samples.

\section{Conclusion}
In this paper, we introduce a novel Gated Compression (GC) layer that can be incorporated into existing neural network architectures to convert them into Gated Neural Networks. This allows standard networks to benefit from the advantages of gating, such as improved performance and efficiency.


The GC layer is a lightweight layer for efficiently reducing the data transmission and computation needs. Its gate can (a) efficiently save the data transmission by early stopping the negative samples and (b) positively affect the model performance by (i) pre-training the early layers towards a better direction and (ii) simplifying the problem be removing negative samples early; Its \textit{Compression} layer can (a) efficiently save the data transmission by compressing on its output of the remaining samples for propagating across boundaries and (b) positively affect the model performance by discarding irrelevant or partially relevant dimensions. Together, the GC layer is able to reduce the amount of transmitted data efficiency by early stopping $96\%$ of negative samples and compressing $97\%$ per propagated sample, while improving accuracy by $1.06\sim6.12$ percentage points.

GC layers can be integrated into an existing network without modification. Then, the new network is able to be distributed across multiple heterogeneous compute islands to fully utilize all resources available. Therefore, larger and more powerful models can be built for better performance for Always-On use cases.

\newpage
\balance
\bibliography{paper}
\balance
\bibliographystyle{icml2023}
\nobalance
\clearpage
\appendix

\begin{center}
\Huge  \textsc{Appendix}
\end{center}
\section{Property of the Distributed Framework}

\begin{prop}\label{prop:distributed_framework}
The distributed framework will not affect the model prediction performance.
\end{prop}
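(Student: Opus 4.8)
The plan is to show that executing the partitioned sub-networks in sequence---whether co-located or spread across heterogeneous compute islands---computes exactly the same function as the original monolithic network, so that no input's prediction can change. The key observation is that the partition $\mathcal{F}=\{\mathcal{F}^{i}\}_{i=1}^{k}$ is defined precisely by the recurrence $x_{1}=x$ and $x_{i+1}=\mathcal{F}^{i}(x_{i})$, which is nothing more than an associative regrouping of the original sequential chain of layers into $k$ contiguous blocks. Inserting a partition boundary between two adjacent layers does not alter the map computed by any individual layer, so the only thing left to check is that reassembling the blocks reproduces the original composite.

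First I would fix notation using the composition operator $\mathcal{F}^{i\mapsto j}$ already defined in the excerpt and write the full network as $\mathcal{F}=\mathcal{F}^{1\mapsto k}$, noting that each of the $k-1$ split points merely places a delimiter between two adjacent layers while leaving every layer's transformation untouched. Then I would prove by induction on $k$ that $\mathcal{F}^{1\mapsto k}(x)=\mathcal{F}(x)$ for every input $x$. The base case $k=1$ is immediate since $\mathcal{F}^{1}=\mathcal{F}$. For the inductive step, the intermediate output $y_{i}=\mathcal{F}^{1\mapsto i}(x)$ produced after the first $i$ sub-networks coincides with the activation the monolithic network holds after the corresponding prefix of layers; applying $\mathcal{F}^{i+1}$ then reproduces the next block, and associativity of function composition closes the step. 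Because the final-layer outputs are identical, the predicted label is unchanged.

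Third, for the genuinely \emph{distributed} setting I would make explicit the one assumption the statement silently relies on: transmitting an intermediate activation $y_{i}$ across a hardware boundary (e.g. a Bluetooth or WiFi link, or an inter-core bus) is a lossless, deterministic identity on the activation tensor, so that the receiving island $i+1$ consumes exactly the tensor it would have seen on a single device. Under this assumption the two execution modes are indistinguishable input-by-input. I expect the main obstacle to be not any deep mathematics---the substance is entirely the associativity argument above---but the precise statement of this transmission assumption: distinct compute islands may use different numerical precisions, so a fully rigorous claim should either posit identical arithmetic (idealized real arithmetic, or a shared fixed floating-point format across islands) or, more weakly, argue that the prediction is invariant under sufficiently small perturbations of the transmitted activation. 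This is the only place where ``will not affect performance'' requires qualification; everything else follows directly from the associativity of composition.
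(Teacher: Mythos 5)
Your proposal is correct and rests on the same one-line observation as the paper's own proof: splitting $\mathcal{F}_\theta$ into contiguous blocks and composing them sequentially gives $\mathcal{F}^{k}\cdots\mathcal{F}^{1}(x)=\mathcal{F}^{1\mapsto k}(x)=\mathcal{F}_\theta(x)$, so predictions are unchanged. Your added induction and the explicit lossless-transmission/numerical-precision caveat are more careful than the paper's terse argument, but they formalize the same idea rather than taking a different route.
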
 

\begin{proof}
For any raw input $x$, the same output of $\mathcal{F}_\theta$ can be produced with all $k$ disjoint sub models together: 
\begin{equation*}
    \mathcal{F}_\theta(x)=\underbrace{\mathcal{F}^{k} \cdots\mathcal{F}^{1}}_\texttt{\tiny$k$ sub models}(x)=\mathcal{F}^{1 \mapsto k}(x).
\end{equation*}
\end{proof}

\section{Metrics of Gated}

To quantify and evaluate the performance of a Gate $\mathcal{G}$, a list of metrics are defined using True/False Positive/Negative ($TP$, $TN$, $FP$, and $FN$).

\theoremstyle{definition}
\begin{definition_}[\textbf{Stop Rate $\mathcal{P}_{sr}$}] \label{def:stop_rate} The percentage of samples, which are stopped by a Gate, is defined as:
\begin{equation*}
    \mathcal{P}_{sr}=\frac{TP +FP}{TP+FP+TN+FN}.
\end{equation*}
\end{definition_}

\begin{definition_}[\textbf{Negative Pass Through Rate $\mathcal{P}_{nptr}$}] The percentage of negative samples, which are mistakenly allowed to pass through by a Gate, is defined as:
\begin{equation*}
    \mathcal{P}_{nptr}=\frac{FP}{FP+TN}.
\end{equation*}
\end{definition_}

\begin{definition_}[\textbf{Positive Lost Rate $\mathcal{P}_{plr}$}] The percentage of positive samples, which are mistakenly stopped by a Gate, is defined as:
\begin{equation*}
    \mathcal{P}_{plr}=\frac{FN}{FN+TP}.
\end{equation*}
\end{definition_}

\begin{definition_}[\textbf{Negative Correction Rate $\mathcal{P}_{ncr}$}] The percentage of negative samples, which are correctly stopped by a Gate $\mathcal{G}$ but will be incorrectly classified later if the Gate lets them propagate through, is defined as:

\begin{equation*}
    \mathcal{P}_{ncr}=\frac{\left|\left\{(x,y)| \mathcal{G}(x)=0\wedge \mathcal{F}(x)\neq0\right\}\right|}{\left|\left\{(x,y)| y=0\right\}\right|}.
\end{equation*}
\end{definition_}

\begin{remark_}
A larger $\mathcal{P}_{ncr}$ is preferable since it decreases the difficulty for later sub models.
\end{remark_}

\section{Properties of \textit{Compression} layer}

\begin{prop}\label{prop:sparse_output}
The \textit{Compression} layer has the following property: a sparse weight matrix $\varphi$ leads to a sparse output $y$.
\end{prop}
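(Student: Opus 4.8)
The plan is to establish the claim directly from the defining equation of the \textit{Compression} layer, Eq.~\eqref{compression_droput_layer}, by working one coordinate at a time. Because $y = x \circ \varphi$ is an element-wise (Hadamard) product, it decomposes into the $n$ scalar identities $y_j = x_j\,\varphi_j$ for each index $j$. No structural or analytic machinery is needed beyond this observation; the entire argument is about the supports of the vectors involved.

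The key step is a support-containment argument. I would observe that whenever $\varphi_j = 0$, the corresponding output coordinate satisfies $y_j = x_j \cdot 0 = 0$, irrespective of the value of $x_j$. Consequently $\mathrm{supp}(y) \subseteq \mathrm{supp}(\varphi)$, where $\mathrm{supp}(\cdot)$ denotes the set of indices at which a vector is nonzero. Taking cardinalities yields $\|y\|_0 \le \|\varphi\|_0$, so every zero entry in the weight vector $\varphi$ forces a zero in the output $y$. In particular, if $\varphi$ has only $s$ nonzero entries, then $y$ has at most $s$ nonzero entries, which is precisely the assertion that a sparse $\varphi$ produces a sparse $y$.

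I do not expect a genuine obstacle here, since the statement is an immediate consequence of the coordinatewise structure of the Hadamard product. The only point that warrants a word of care is the direction of the implication: the containment $\mathrm{supp}(y) \subseteq \mathrm{supp}(\varphi)$ may be strict, because a coordinate with $\varphi_j \neq 0$ still produces $y_j = 0$ whenever $x_j = 0$. This does not weaken the conclusion but reinforces it, as the output can only be sparser than the weight, never less sparse. For completeness I would also note the binarized regime $\varphi \in \{0,1\}^n$ emphasized in the paper, in which $\varphi_j$ acts as a hard pass-through ($1$) or suppression ($0$) mask, so that $\mathrm{supp}(y) = \{\, j : \varphi_j = 1 \text{ and } x_j \neq 0 \,\}$ and the bound $\|y\|_0 \le \|\varphi\|_0$ continues to hold verbatim.
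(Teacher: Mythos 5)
Your proof is correct: the coordinatewise identity $y_j = x_j\varphi_j$ immediately gives $\mathrm{supp}(y)\subseteq\mathrm{supp}(\varphi)$ and hence $\|y\|_0\le\|\varphi\|_0$. The paper states this proposition without an explicit proof and relies on exactly this observation later (in the proof of Conjecture~\ref{conj:sparsity}, where zero weight dimensions are said to yield zeros in the corresponding output dimensions), so your argument matches the paper's intended reasoning and simply spells it out.
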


\begin{cor}\label{cor:sparsity_regularization}
For a \textit{Compression} layer, encouraging the sparsity of $y$ is equivalent to putting a sparsity regularization on $\varphi$.
\end{cor}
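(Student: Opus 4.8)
The plan is to derive the corollary directly from the element-wise structure of the \textit{Compression} layer in Eq.~\eqref{compression_droput_layer} together with Proposition~\ref{prop:sparse_output}. Writing the layer coordinate-wise as $y_j = x_j\,\varphi_j$, the first step is to record the support inclusion $\operatorname{supp}(y)\subseteq\operatorname{supp}(\varphi)$: whenever $\varphi_j=0$ the product $x_j\varphi_j$ vanishes for \emph{every} input $x$, so $\|y\|_0\le\|\varphi\|_0$. This is exactly the content of Proposition~\ref{prop:sparse_output} and gives one direction — sparse $\varphi$ forces sparse $y$ — essentially for free.

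The heart of the corollary is the converse, i.e.\ that encouraging $y$ to be sparse can \emph{only} be achieved through $\varphi$, so that the two regularization targets coincide. Here I would first make the notion of ``sparsity of $y$'' precise as sparsity holding uniformly over the input distribution: a coordinate $j$ is genuinely compressed only when $y_j=0$ for (almost) every $x\sim\mathbf{P}(\mathbb{X},\mathbb{Y})$. The key observation is that $y_j=x_j\varphi_j$ depends on the data solely through $x_j$, which is not under the model's control and is generically nonzero; consequently the only input-independent way to force $y_j\equiv 0$ is to set $\varphi_j=0$. Combined with the forward inclusion, this yields $\operatorname{supp}(y)=\operatorname{supp}(\varphi)$ as an input-independent pattern, so the set of compressible coordinates is determined entirely by $\varphi$.

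The final step connects this support-level equivalence to the concrete regularizer $\mathcal{L}_\text{trans}(\varphi)$ appearing in Eq.~\eqref{compression_loss}. Since $\varphi$ is binarized to $\{0,1\}$ — the ``pass-through'' ($1$) versus ``compress'' ($0$) indicator described in the \textit{Compression} layer paragraph — counting active output dimensions coincides with $\|\varphi\|_0=\|\varphi\|_1$, so the $\mathcal{L}_1$ penalty $\mathcal{L}_\text{trans}(\varphi)$ literally counts the dimensions that remain uncompressed. Minimizing it is therefore identical to minimizing the (uniform) number of nonzero outputs of the layer, which is the precise sense in which regularizing $\varphi$ is equivalent to encouraging sparsity of $y$.

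I expect the main obstacle to be definitional rather than technical: one must pin down what ``sparsity of $y$'' means given that $y$ is input-dependent, and handle the degenerate case where $x_j=0$ on a positive-measure set (so that a coordinate could look sparse in the data without $\varphi_j=0$). The clean resolution is to phrase the equivalence in terms of the guaranteed, input-independent support controlled by $\varphi$; under the mild genericity assumption that each $x_j$ is nonzero with positive probability, the two supports agree and the equivalence is exact.
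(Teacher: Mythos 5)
The paper states this corollary without any proof at all: it is presented as an immediate consequence of Proposition~\ref{prop:sparse_output}, whose content is exactly your forward direction ($\varphi_j=0$ forces $y_j=x_j\varphi_j=0$ for every input, hence $\operatorname{supp}(y)\subseteq\operatorname{supp}(\varphi)$). Your argument is correct and rests on the same element-wise-product structure, but it is genuinely more complete than what the paper offers: the converse direction --- that the only input-independent way to zero an output coordinate is to zero the corresponding weight, under the mild genericity assumption that each $x_j$ is nonzero with positive probability --- is what actually earns the word ``equivalent'' in the statement, and the paper never supplies it. Your handling of the definitional issue (sparsity of an input-dependent $y$ must be read as uniform-over-the-data support) is the right resolution and worth making explicit. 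One small mismatch: your third step invokes binarization of $\varphi$ to identify $\|\varphi\|_1$ with $\|\varphi\|_0$, but Corollary~\ref{cor:sparsity_regularization} is stated for the plain real-valued \textit{Compression} layer of Equation~\ref{compression_droput_layer}; binarization is introduced only later, and the paper records its effect in a separate proposition (sparsity of $y$ controlled by $\Gamma(\varphi)$) with its own corollary about admissible $\mathcal{L}_p$ penalties. For the result as stated, it suffices to say that an $\mathcal{L}_1$ penalty on real-valued $\varphi$ drives entries to zero and, by the support identity, each zeroed entry removes exactly one output coordinate --- no appeal to $\Gamma$ is needed.
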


\begin{definition_}[\textbf{Dropout Rate $\mathcal{P}_{dr}$}]\label{def:dropout_rate} The percentage of output dimensions, which are dropped out by a \textit{Compression} layer, is defined as:
\begin{equation*}
    \mathcal{P}_{dr}(\mathcal{C}_{\varphi})=1-\frac{|\varphi|_0}{\dim \varphi}.
\end{equation*}
\end{definition_}

\section{Binarizing the Weight Matrix}

To push most dimensions of the weight matrix $\varphi$ toward $\{0, 1\}$, we apply weight clipping $\sigma(\cdot)$ (Equation \ref{weight_clipping}) to limit the domain. Specifically, we use the element-wised ReLU-1 \citep{relu-6} activation function for constraining all dimensions into the range of $[0, 1]$.
\begin{equation}\label{weight_clipping}
    \sigma(\varphi) = \text{min}(1, \text{max}(0, \varphi))
\end{equation}

During the forward propagation, the binarize function, defined in Equation \ref{binarize}, is used for converting a floating weight into a binary weight.
\begin{equation}\label{binarize}
    \Gamma(\varphi) = \mathbf{1}_{{\varphi}>\frac{1}{2}}, \forall \varphi \in[0, 1]
\end{equation}

\begin{prop}
The \textit{Compression} layer with binarized weight matrix has the following property: the sparsity of $y$ is controlled by the sparsity of $\Gamma(\varphi)$ instead of $\varphi$ itself.
\end{prop}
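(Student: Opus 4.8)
The plan is to reduce this statement to the earlier Proposition~\ref{prop:sparse_output} by observing that, once the binarization $\Gamma$ of Equation~\ref{binarize} is inserted into the forward pass, the \emph{effective} weight vector is $\Gamma(\varphi)$ rather than $\varphi$, and then simply tracking the support of the element-wise product.

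First I would make the forward rule explicit: with binarized weights, the Compression layer of Equation~\ref{compression_droput_layer} computes $y = x \circ \Gamma(\varphi)$, so coordinate-wise $y_j = x_j\,\Gamma(\varphi)_j$ with $\Gamma(\varphi)_j \in \{0,1\}$. Hence $y_j = 0$ whenever $\Gamma(\varphi)_j = 0$ and $y_j = x_j$ whenever $\Gamma(\varphi)_j = 1$, which yields the support inclusion $\operatorname{supp}(y) \subseteq \operatorname{supp}(\Gamma(\varphi))$ and therefore $\|y\|_0 \le \|\Gamma(\varphi)\|_0$. This is exactly Proposition~\ref{prop:sparse_output} read with $\Gamma(\varphi)$ playing the role of the weight matrix, so a sparse $\Gamma(\varphi)$ forces a sparse $y$.

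Next I would establish the substantive half of the claim, the ``instead of $\varphi$ itself'' clause. The point is that the raw sparsity of $\varphi$ --- as measured, e.g., by the count $|\varphi|_0$ appearing in Definition~\ref{def:dropout_rate} --- is decoupled from the sparsity of $y$. Any coordinate with $\varphi_j \in (0, \tfrac12]$ is nonzero, so it does not reduce $|\varphi|_0$, yet it satisfies $\Gamma(\varphi)_j = 0$ and thus kills $y_j$. Conversely, two weight vectors $\varphi, \varphi'$ with $\Gamma(\varphi) = \Gamma(\varphi')$ yield identical outputs on every input $x$, so $y$ cannot depend on anything about $\varphi$ beyond $\Gamma(\varphi)$. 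Together these show that even a fully dense $\varphi$ can induce an arbitrarily sparse $y$, and that the sole quantity governing the sparsity of $y$ is $\Gamma(\varphi)$.

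There is no genuine analytic obstacle here; the content is essentially definitional. The one place to be careful is the direction of the support relation: the inclusion $\operatorname{supp}(y) \subseteq \operatorname{supp}(\Gamma(\varphi))$ holds unconditionally, whereas equality requires the generic nondegeneracy assumption that the surviving activations $x_j$ (those with $\Gamma(\varphi)_j = 1$) are themselves nonzero. Since ``controlled by'' needs only the guarantee that zeros in $\Gamma(\varphi)$ propagate to zeros in $y$, the inclusion direction suffices, and I would relegate the equality case to a remark under the nondegeneracy hypothesis on $x$.
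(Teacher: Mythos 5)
Your proof is correct and follows exactly the reasoning the paper intends: the paper states this proposition without an explicit proof, but its justification of Conjecture~\ref{conj:sparsity} uses precisely your argument — during the forward pass the effective weight is $\Gamma(\varphi)$, so Proposition~\ref{prop:sparse_output} applies to $\Gamma(\varphi)$ rather than $\varphi$, and zeros of $\Gamma(\varphi)$ propagate to zeros of $y$. Your additional observations (that coordinates with $\varphi_j\in(0,\tfrac12]$ are nonzero yet binarize to zero, and that $y$ depends on $\varphi$ only through $\Gamma(\varphi)$) make the ``instead of $\varphi$ itself'' clause explicit, which the paper leaves implicit.
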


\begin{cor}
To yield sparse $\Gamma(\varphi)$ and $y$,  $\mathcal{L}_\text{trans}$ (in Equation \ref{compression_loss}) could be $\mathcal{L}_p, \forall p>0$. 
\end{cor}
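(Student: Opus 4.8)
The plan is to reduce the claim to one elementary monotonicity fact about the penalty $t \mapsto t^p$ on the clipped domain $[0,1]$, and then let the preceding Proposition transport the resulting sparsity from $\Gamma(\varphi)$ to $y$. First I would record that after the clipping map $\sigma$ of Equation~\eqref{weight_clipping}, every coordinate satisfies $\varphi_i \in [0,1]$, so that $\mathcal{L}_p(\varphi) = \sum_i |\varphi_i|^p = \sum_i \varphi_i^p$ with each summand nonnegative. Next I would note that the binarization of Equation~\eqref{binarize} makes the quantity of interest discrete: $\Gamma(\varphi)_i = 0$ exactly when $\varphi_i \le \tfrac12$. Thus ``sparsity of $\Gamma(\varphi)$'' means ``many coordinates lie below the threshold $\tfrac12$'', not ``many coordinates equal $0$ exactly''.

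The heart of the argument is that on $[0,1]$ the map $t \mapsto t^p$ is strictly increasing for every $p > 0$, with derivative $p\,t^{p-1} > 0$ on $(0,1)$ and unique minimum at $t = 0$. Hence, coordinate-wise, the regularizer exerts a strictly downward pull on $\varphi_i$, irrespective of $p$. Inside the compression loss of Equation~\eqref{compression_loss} this pull, weighted by $\beta$, competes against the prediction term; whenever a coordinate is only weakly useful for prediction the penalty dominates and drives $\varphi_i$ below $\tfrac12$, at which point Equation~\eqref{binarize} snaps it to $0$. This is precisely why any $p>0$ suffices: unlike the classical $\ell_1$ story, the penalty need not itself manufacture exact zeros in $\varphi$, because the fixed threshold at $\tfrac12$ on the bounded domain converts mere shrinkage into genuine zeros of $\Gamma(\varphi)$, so monotonicity of $t^p$ is all that is required, and increasing $\beta$ pushes yet more coordinates below threshold.

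Finally I would invoke the preceding Proposition, which asserts that with binarized weights the output sparsity is governed by that of $\Gamma(\varphi)$: since the layer acts as $y = x \circ \Gamma(\varphi)$ by Equation~\eqref{compression_droput_layer}, every coordinate zeroed in $\Gamma(\varphi)$ is also zeroed in $y$. Chaining these three observations shows that taking $\mathcal{L}_\text{trans} = \mathcal{L}_p$ for any $p>0$ yields sparse $\Gamma(\varphi)$ and hence sparse $y$.

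I expect the main obstacle to be conceptual rather than computational, namely justifying that $p \ge 1$ ``works'' even though $\ell_p$ penalties with $p \ge 1$ are not sparsity-inducing in the usual variable-selection sense. The resolution I would emphasize is that the bounded domain together with the threshold $\tfrac12$ decouples the role of the penalty (supply a monotone downward force) from the role of producing exact zeros (performed by binarization); once this decoupling is made explicit, the standard convexity-versus-nonconvexity dichotomy in $p$ becomes irrelevant, and the statement for all $p>0$ follows uniformly.
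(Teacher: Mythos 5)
Your proposal is correct and follows essentially the same route the paper intends: the preceding Proposition reduces output sparsity to sparsity of $\Gamma(\varphi)$, and since $\Gamma$ thresholds the clipped weights at $\tfrac12$, any $\mathcal{L}_p$ with $p>0$ suffices because $t\mapsto t^p$ is monotone increasing on $[0,1]$ and therefore shrinks every coordinate toward the sub-threshold region, with binarization (not the penalty) responsible for the exact zeros. The paper actually states this corollary without any written proof, so your writeup simply makes its implicit argument explicit; your emphasis on decoupling the monotone-shrinkage role of the penalty from the zero-production role of the threshold is exactly the point that justifies ``$\forall p>0$'' rather than the usual $p\le 1$ restriction.
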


Since $\mathcal{L}_2$ is smoother than $\mathcal{L}_1$ in the domain $\varphi\in[0,1]$, which yields better degree of control. Therefore, in this paper, we use $\mathcal{L}_2$ for the data transmission regularization. Figure \ref{fig:w_hist} is an real example of weight distribution using $\mathcal{L}_2$ regularization.

\begin{conjecture_}\label{conj:sparsity}
Without regularization, the \textit{Compression} layer with binarized weight matrix still encourages sparse output.
\end{conjecture_}

\begin{proof}
During the forward propagation, all weight dimensions are binarized into $\{0, 1\}$, therefore, the dimensions of $0$s will yield $0$s in the corresponding dimensions of the output (Proposition \ref{prop:sparse_output}).

Assuming all weight dimensions are drew from a Bernoulli distribution: $\varphi_i\sim\text{Bernoulli} (p), p\in(0,1), \forall i \in[0,\dim \varphi]$.  Then, the expected sparsity is $(1-p)$, which is lager than $0$.

Additionally, our experiment results also confirm that there are about 40\% sparsity in the activation outputs even without regularization. 
\end{proof}

As shown in Equation \ref{binarize} and Figure \ref{fig:binarize}, the derivative of $\Gamma(\cdot)$ is: $\frac{\mathrm{d} \Gamma}{\mathrm{d} \varphi}=0, \forall \varphi \in[0, 1]$.
Therefore, it is impossible to do gradient back-propagating. Instead, the straight through estimator \citep{EstimatingGradients,courbariaux2016binarized, ste} is applied during back-propagation for estimating the gradient. 

\begin{figure}[t]
\includegraphics[width=0.485\textwidth]{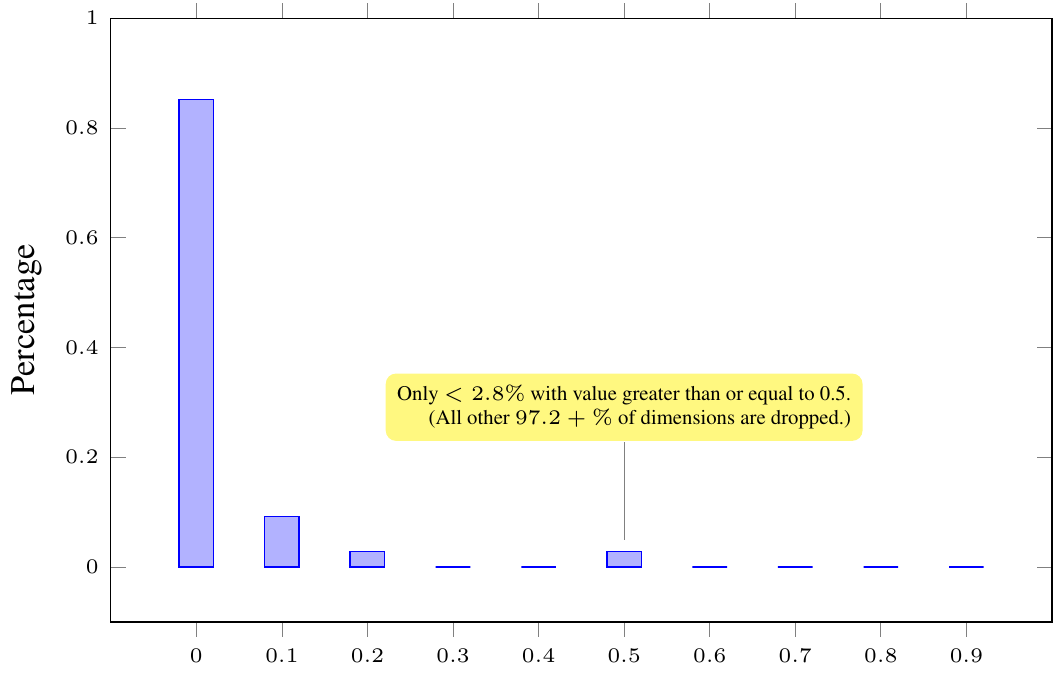}\caption{A real example of the weight distribution after 100 epochs on the Cifar10 dataset.
		}\label{fig:w_hist}
	
\end{figure}

\section{Compactness of the \textit{Compression} layer}

\begin{prop}
The \textit{Compression} layer has the following property: By binarizing, the data amount of the weight matrix \footnote{The weight matrix is stored in floating numbers when training, but all dimensions are binarized with $\Gamma(\cdot)$ (Equation \ref{binarize}) during the forward propagation. Thus, binarizing them into $\{0,1\}$s for deploying will not affect the model performance.} 
can be reduced by $N$ (number of bits required for one weight dimension in the default datatype) times for deploying.
\end{prop}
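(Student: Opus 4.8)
The plan is to carry out a direct bit-accounting comparison between the stored representation of the weight matrix $\varphi$ before and after binarization. First I would fix the ambient datatype used during training, in which each of the $\dim\varphi$ scalar entries of $\varphi$ occupies $N$ bits (for instance $N=32$ for single-precision floating point, as recorded in the statement). The total storage cost of $\varphi$ in this datatype is therefore $N\cdot\dim\varphi$ bits.

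Next I would appeal to the binarize function $\Gamma(\cdot)$ of Equation \ref{binarize}, which maps every entry of $\varphi$ into the two-element set $\{0,1\}$. Since $\Gamma(\varphi)$ takes values in a set of cardinality $2$, each binarized entry can be encoded with a single bit, so the deployed representation $\Gamma(\varphi)$ occupies exactly $1\cdot\dim\varphi=\dim\varphi$ bits. Forming the ratio of the two storage costs yields $\tfrac{N\cdot\dim\varphi}{\dim\varphi}=N$, which establishes the claimed $N$-fold reduction in data amount.

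Finally I would invoke the observation recorded in the footnote attached to the statement: the forward propagation already evaluates the \emph{Compression} layer on $\Gamma(\varphi)$ rather than on $\varphi$ itself, so replacing the floating-point weights by their binary counterparts at deployment leaves every layer output, and hence the model prediction, unchanged. This certifies that the reduction is obtained without any accuracy cost, which is the practically relevant part of the claim.

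The work here is essentially a counting argument, so there is no genuine mathematical obstacle; the only subtlety is to be precise about what ``data amount'' means and to confirm that one bit per dimension genuinely suffices. The latter reduces to noting that the codomain of $\Gamma$ has exactly two elements, so a single bit exhausts its range, and I would state this explicitly to keep the factor-of-$N$ bookkeeping unambiguous.
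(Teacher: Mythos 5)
Your proposal is correct and follows essentially the same argument as the paper's one-line proof: each binarized entry of $\varphi$ needs only $1$ bit instead of $N$, giving the $N$-fold reduction. Your additional bookkeeping (total bits before and after, and the appeal to the footnote that the forward pass already uses $\Gamma(\varphi)$ so accuracy is unaffected) just makes explicit what the paper leaves implicit.
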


\begin{proof}
Once binarized, any dimension in $w$ can be represented with 1-bit ($\{0,1\}$) instead of a $N$-bits datatype (for example, in Tensorflow, the default datatype is float32, which is 32-bits).
\end{proof}

Additionally, the $\mathcal{L}_\text{trans}$ regularization (Equation \ref{compression_loss}) encourages sparse weight matrix, for which sparse encoding can reduce the size further. 
\begin{cor}
Let $m$ be the number of dimensions, $p$ be the percentage of the non-zero weights, and $N$ is the number of bits required for a weight in the default datatype, then the data compression rate of sparse encoding is 
\begin{equation}\label{data_compressed_rate}
   \mathcal{P}_{cr}(m,p)=\frac{N}{p\left \lceil \log_2(m) \right \rceil}.
\end{equation}
\end{cor}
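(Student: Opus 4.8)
The plan is to prove the claim by a direct bit-counting argument: I would count the storage cost of the original dense representation, count the cost of a sparse index-list encoding of the binarized weights, and then form their ratio. First I would fix the baseline. The uncompressed weight matrix has $m=\dim\varphi$ dimensions, each stored in the default datatype at $N$ bits, so the dense representation costs $Nm$ bits. This is precisely the baseline used in the preceding proposition, where binarization \emph{alone} was shown to shrink the cost by a factor of $N$.

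Next I would describe the sparse encoding and count its cost. The key observation, inherited from the binarization step in Equation \ref{binarize}, is that every retained weight equals exactly $1$, so a sparse encoding need not store any \emph{values}: it suffices to record the positions of the non-zero entries, since all remaining positions are implicitly $0$. By definition $p$ is the fraction of non-zero weights, so there are $pm$ such positions (equivalently $pm=|\varphi|_0=(1-\mathcal{P}_{dr})m$ by Definition \ref{def:dropout_rate}). Each position is an index into one of $m$ slots and therefore requires $\lceil\log_2(m)\rceil$ bits to address, so the sparse encoding costs $pm\lceil\log_2(m)\rceil$ bits in total.

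Finally I would form the compression rate as the ratio of dense to sparse storage, $\mathcal{P}_{cr}(m,p)=\frac{Nm}{pm\lceil\log_2(m)\rceil}$, and cancel the common factor $m$ to recover $\frac{N}{p\lceil\log_2(m)\rceil}$, the claimed expression. I would also point out the clean multiplicative reading: the factor $N$ is exactly the binarization gain from the previous proposition, while the extra factor $1/(p\lceil\log_2(m)\rceil)$ is the additional gain from sparse index encoding, so the two reductions simply compose.

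There is no substantive obstacle here; the result is essentially bookkeeping. The only points requiring care are (i) justifying that values need not be stored, which rests entirely on the weights being binary after $\Gamma(\cdot)$, and (ii) fixing the correct baseline for the ratio, namely the original $N$-bit dense representation rather than the $1$-bit dense binary one, since it is the presence of $N$ in the numerator that identifies which comparison is intended. I would additionally flag that the $\lceil\log_2(m)\rceil$ bits per index reflect a plain fixed-width index list; a more aggressive information-theoretic scheme could reduce this further, but the corollary is stated for this simple encoding.
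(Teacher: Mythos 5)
Your proof is correct and follows essentially the same route as the paper's: count $Nm$ bits for the dense $N$-bit representation, count $pm\left\lceil \log_2(m)\right\rceil$ bits for the index-list sparse encoding of the $pm$ non-zero binarized weights, and take the ratio. Your added remarks on why no values need be stored and on fixing the baseline are reasonable clarifications but do not change the argument.
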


\begin{proof}
The number of bits required to encode the index of all $m$ is $\left \lceil \log_2(m) \right \rceil$, and there are $pm$ non-zero dimensions to be encoded. Therefore, the number of bits required for encoding all non-zero dimensions is $pm\left \lceil \log_2(m) \right \rceil$.

Without encoding, each weight is stored in $N$-bits, which in total equates to $Nm$ bits; With sparse encoding, the required bits are reduced from $Nm$ to $pm\left \lceil \log_2(m) \right \rceil$ , which yields a compression rate of $\frac{N}{p\left \lceil \log_2(m) \right \rceil}$.
\end{proof}

Normally, $p\leq5\%$, $m\leq2^{9}$ and $N=32$ from our experiment results. With Equation \ref{data_compressed_rate}, we have $\mathcal{P}_{cr}(5\%,2^9)={71}$. In total, its parameter size is ${32*2^9}/{71}=230.8$ bits (or ${2^9}/{71}=\textbf{7.2}$ float32), which is minimal and compact enough for edge hardware modules\footnote{Normally, the edge hardware for ML has 1+k RAM and 10+k ROM. For example, ARM Cortex-M0 has 8k RAM (but needs to hold FW memory as well) and 64k ROM.}.
\section{Model Architecture}

\begin{figure}
\centering\includegraphics[width=0.485\textwidth]{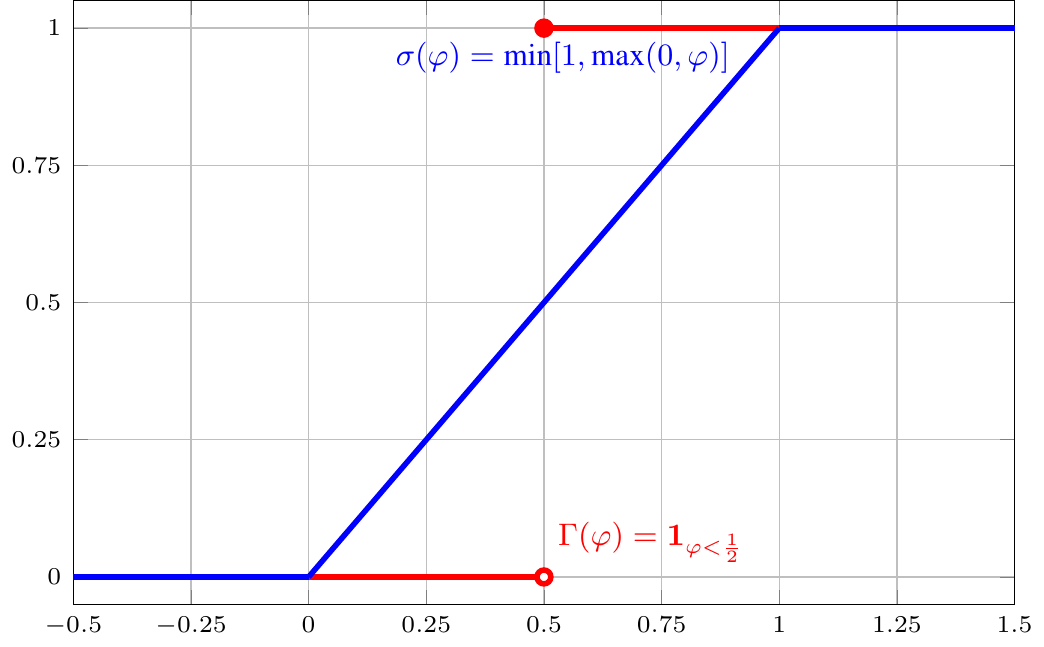}\caption{Plot of the Binarize Function $\Gamma(\cdot)$ and the Weight Clipping Function $\sigma(\cdot)$}
	\label{fig:binarize}
\end{figure}
\begin{figure}[H]
	\centering\includegraphics[width=0.495\textwidth]{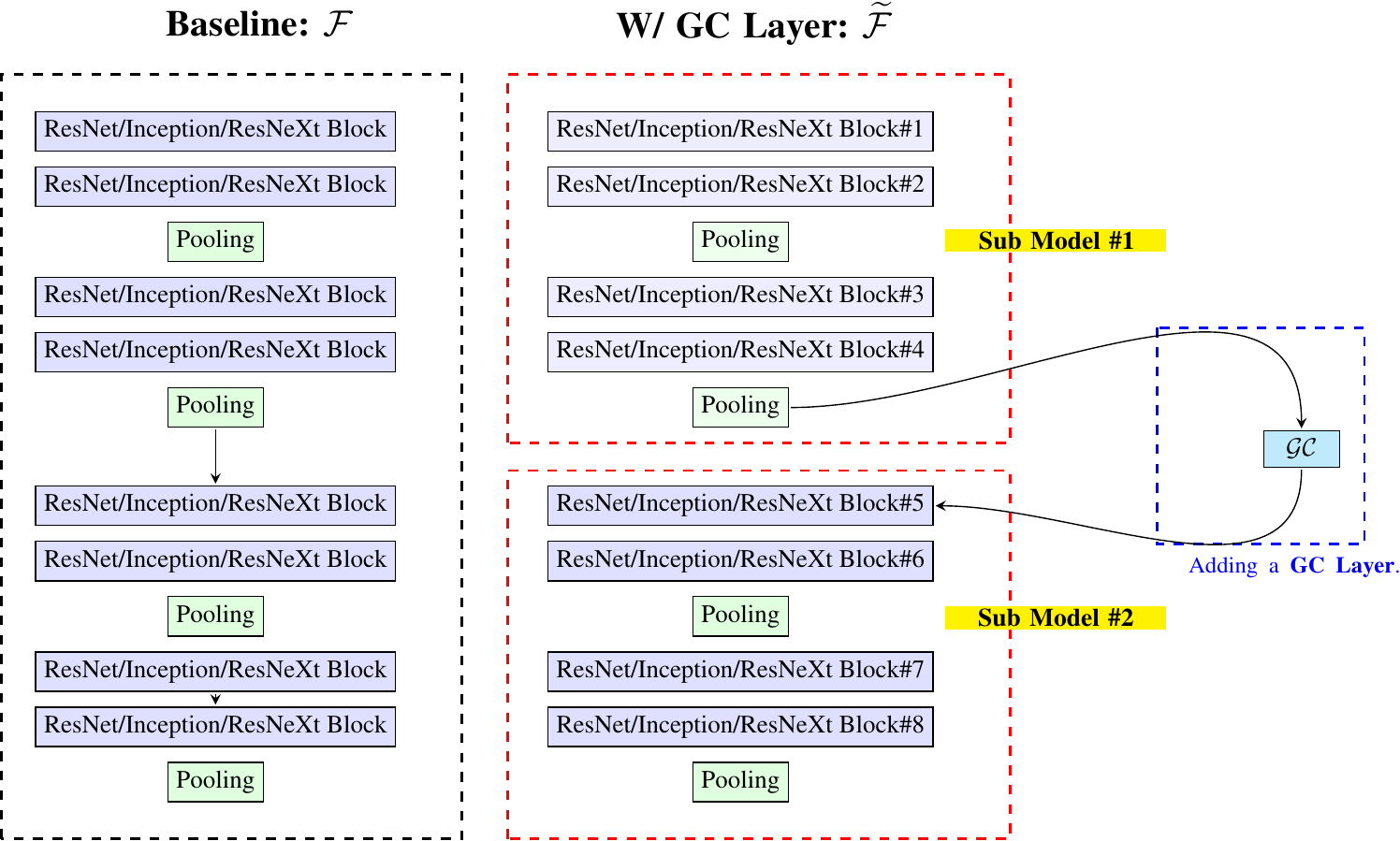}\caption{Model Architectures: the Baseline $\mathcal{F}$ model and the new $\widetilde{\mathcal{F}}$ model built from Baseline by adding one GC Layer. Note: (1) The model architectures consist of 8 ResNet/Inception/ResNeXt blocks, and additional 2 linear blocks for the classification head; (2) A GC layer can be inserted after each of these blocks.
		}\label{fig:model_arch}
	\end{figure}

\section{Deeper Analysis}

A set of experiments on the Cifar10 dataset are performed to analyze the properties of the proposed GC layer deeper. 

\subsection{Effectiveness of $\mathcal{G}$}\label{gate_eff}
To evaluate the performance of the \textit{Gate} $\mathcal{G}$, a set of experiments are conducted by varying its hyperparameter $\alpha$. To separate its impact from the \textit{Compression} layer $\mathcal{C}$, the weight matrix of $\mathcal{C}$ is set to all 1s to deactivate it.

\textit{$\mathcal{G}$ Positively Affects the Model Performance.} In Figure \ref{fig:effect_of_alpha_on_model_performance}, for $\alpha\in[0.05,0.8]$, $\widetilde{\mathcal{F}}$ consistently performs better than Baseline with, which indicates that $\mathcal{G}$ can boost the performance of $\widetilde{\mathcal{F}}$. This may be contributed by 1) similar to pre-training, it may tune the early layers to a better direction; 2) it reduces false positives by stopping negative samples earlier.


\textit{Larger $\alpha$ Encourages Better $\mathcal{G}$.} Figure \ref{fig:effect_of_alpha_on_gate_performance} shows both positive lost rate ($\mathcal{P}_{plr}$) and negative pass through rate ($\mathcal{P}_{nptr}$) decrease along with the increase of $\alpha$. 
This aligns with our expectations as a larger $\alpha$ places more weight on $\mathcal{G}$ during training. 

\textit{$\mathcal{G}$ Reduces Difficulty for $\widetilde{\mathcal{F}}$.} Figure \ref{fig:effect_of_alpha_on_gate_performance} shows that the negative corrected rate ($\mathcal{P}_{ncr}$) is always above 0, indicating that $\mathcal{G}$ decreases the complexity for $\widetilde{\mathcal{F}}$.

\begin{figure}[t!]
	\centering\includegraphics[width=0.485\textwidth]{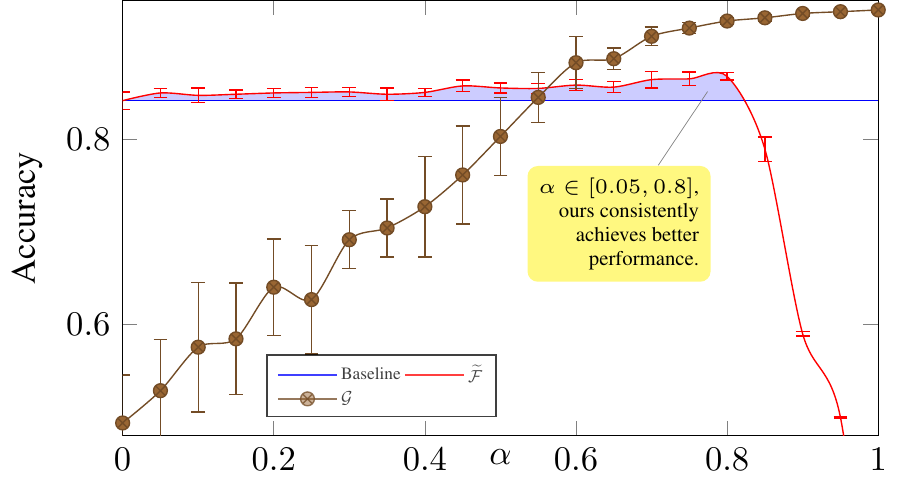}\caption{ Effect of Changing $\alpha$ on $\widetilde{\mathcal{F}}$. 
	}\label{fig:effect_of_alpha_on_model_performance}
	
\end{figure}
\subsection{Gating Analysis}
A gate can have a good early stopping performance for the negative samples, while incorrectly stopping a good percentage of positive samples at the same time.
Therefore, it is important to analyze the gating performance to ensure that the majority of positive samples pass through the network end-to-end for the final classification or prediction task.

The results in Figure \ref{fig:gate_roc} show our GC models consistently outperform BranchyNet in gating performance, as shown by the AUC and ROC curves. 

\begin{figure}[t]
	\centering\includegraphics[width=0.485\textwidth]{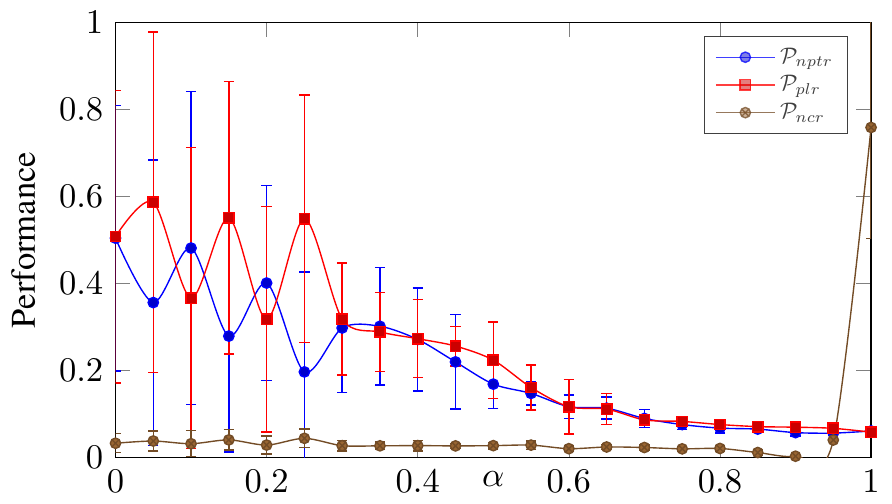}\caption{ Effect of Changing $\alpha$ on $\mathcal{G}$. 
	}\label{fig:effect_of_alpha_on_gate_performance}
\end{figure}

\begin{figure*}[tb]
        \centering
        \includegraphics[width = 0.995\textwidth]{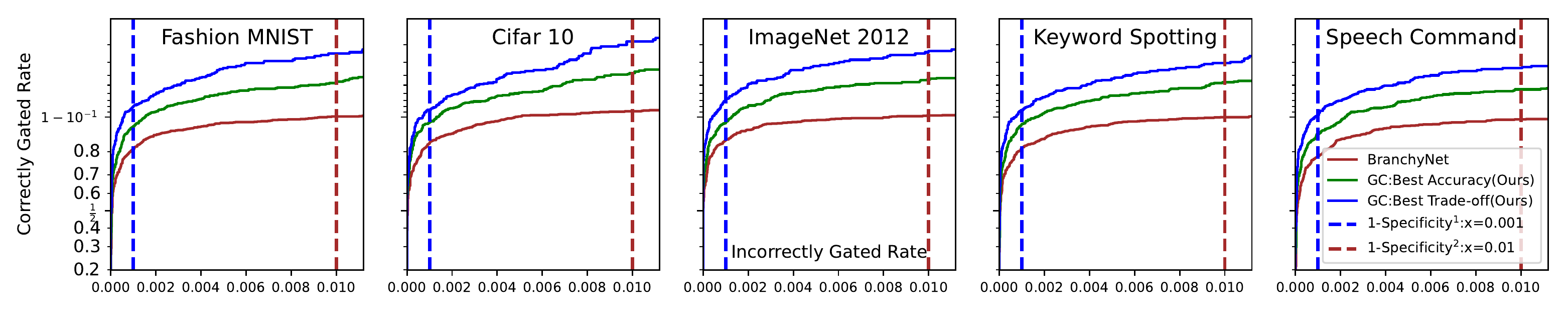}
        \caption{ROC Curves for Early Stopping Across All Five Datasets.}\label{fig:gate_roc}
\end{figure*}

\subsection{Effectiveness of $\mathcal{C}$}\label{dropout_eff}
To understand the effectiveness of the \textit{Compression} layer $\mathcal{C}$, a set of experiments are carried out by changing its hyperparameter $\beta$. To isolate it from the Gate $\mathcal{G}$, $\alpha$ is fixed at $0.5$. The results are reported in Figure \ref{fig:changing_beta}.

\textit{$\mathcal{C}$ Encourages Sparsity Even without Regularization.} When $\beta=0$, it still achieves the density of 0.579. This empirically confirms the Conjecture \ref{conj:sparsity}.

\textit{$\mathcal{C}$ Efficiently Controls Activation Sparsity with $\beta$.} A larger $\beta$ leads to a more sparse output. When $\beta=0.72$, it achieves density of 2.7\% while maintaining the accuracy of 0.879. In another words, it allows for a 97.3\% reduction in data transmission without compromising accuracy. 

\textit{$\mathcal{C}$ Positively Affects the Model Performance.} Similar to feature selection and dimensional reduction, the \textit{Compression} layer improves and stabilizes the model performance by dropping irrelevant or partially relevant dimensions to avoid their negative impact on the model performance. When $\beta<0.72$, $\widetilde{\mathcal{F}}$ achieves $1\%\sim 4.5\%$ accuracy gain comparing to Baseline with the same $\alpha=0.5$.

\begin{figure}[tb!]
	\centering\includegraphics[width=0.485\textwidth]{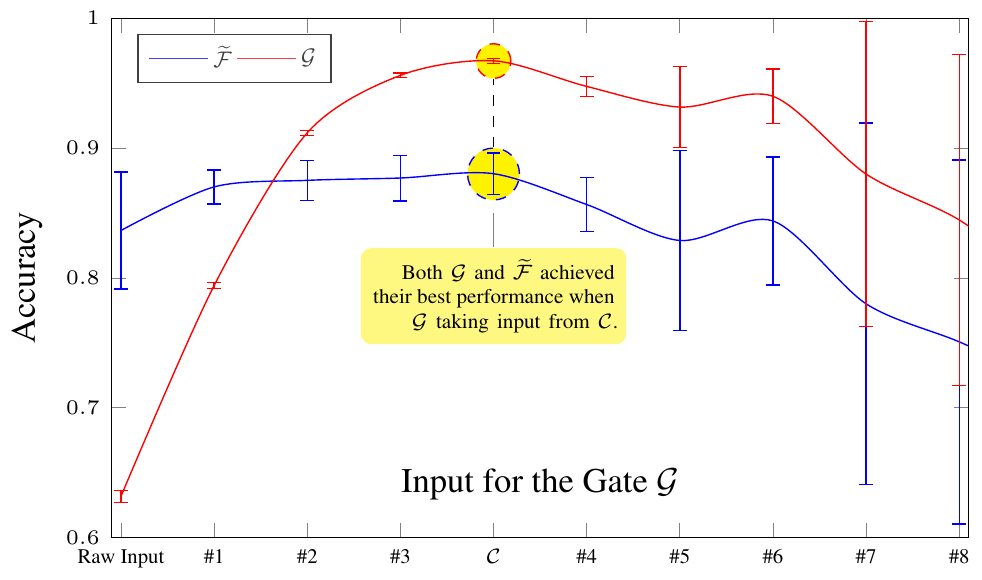}\caption{ Effect of Changing $\beta$ on $\widetilde{\mathcal{F}}$. 
	}\label{fig:changing_beta}
\end{figure}

\subsection{Inputs for $\mathcal{G}$}

From Figure \ref{fig:model_arch}, there are 8 ResNet/Inception/ResNeXt blocks, to understand the effect of the input layer of the Gate $\mathcal{G}$, a set of experiments are performed by linking its input to different layers. Based on the results in Table \ref{tab:overall}, the hyperparameters are chosen as $\alpha=0.7, \beta=0.6$. The results are reported in Figure \ref{fig:gate_inputs}.

\usetikzlibrary{patterns}
\usetikzlibrary{shapes.geometric,shapes,snakes,chains,arrows}
\usetikzlibrary{trees} %
\usetikzlibrary{spy}

\begin{filecontents*}{gate_input.csv}
x, y, ey, g, eg
10, 0.718320, 0.1534979,  0.791962, 0.157616
9, 0.750724, 0.140213,  0.844597, 0.127654
8, 0.780027, 0.139379,  0.879975, 0.117664
7, 0.843853, 0.0493517, 0.939864, 0.0211038
6, 0.828773, 0.0695007, 0.931548, 0.0312394
5, 0.856497, 0.0207867, 0.947471, 0.00780452
4, 0.880243, 0.0159013, 0.967214, 0.00192742 
3, 0.876912, 0.0175586, 0.955972, 0.00172811
2, 0.875113, 0.0153077, 0.911748, 0.00187315
1, 0.870076, 0.0130569, 0.79413, 0.00219428
0, 0.836503, 0.0449289, 0.631754, 0.00456777 
\end{filecontents*}

\begin{filecontents*}{fashin_mnist_gate_input.csv}
x, y, ey, g, eg
7, 0.972636, 0.0356382, 0.993195, 0.00142794
6, 0.992866, 0.00110734, 0.990142, 0.00126606
5, 0.989474, 0.000995796, 0.992763, 0.000820052
4, 0.987202, 0.00127835, 0.991725, 0.00108229 
3, 0.983419, 0.00754022, 0.99191, 0.000857654
2, 0.986852, 0.00199194, 0.985711, 0.00183887
1, 0.982977, 0.0100308, 0.970395, 0.00327111
0, 0.987644, 0.00261264, 0.950843, 0.00297285
\end{filecontents*}
\makeatother
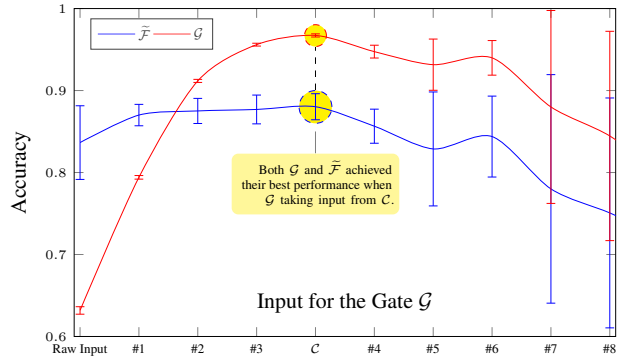
\begin{figure}[tb!]
\centering	\resizebox{0.485\textwidth}{!}{
\begin{tikzpicture}
	\tikzset{
		every pin/.style={fill=yellow!50!white,rectangle,rounded corners=3pt,font=\tiny},
		small dot/.style={fill=black,circle,scale=0.3}
	}
\begin{axis}[
xlabel={Input for the Gate $\mathcal{G}$},
x label style={at={(0.5,0.155)},anchor=north},
ylabel={Accuracy},
xticklabel style={rotate=0, anchor=near xticklabel},
xtick=data,
tick label style={font=\tiny},
xticklabels={\#9, \#8, \#7, \#6, \#5, \#4,$\mathcal{C}$, \#3, \#2, \#1, Raw Input},
width=0.6\textwidth, 
height=0.4\textwidth,
ymin=0.6,
ymax=1.,
xmin=-0.1,
xmax=9.1,
legend cell align={left},
legend columns=4,
legend pos=north west,
legend style={draw opacity=.75,text opacity=.75,fill opacity=0.5,nodes={scale=0.65, transform shape}},
]

\node[circle,draw=red,fill=yellow,inner sep=1pt,minimum size=10pt,dashed] (a) at (axis cs:4,0.967) {};
\node[circle,draw=blue, fill=yellow,inner sep=1pt,minimum size=15pt, dashed] (b) at (axis cs:4,0.88) {};

\addplot+ [
    smooth,
    mark=none,
    error bars/.cd, 
    y fixed,
    y dir=both, 
    y explicit
    ] table [x = x, y = y, y error = ey, col sep = comma] {gate_input.csv};
\addlegendentry{$\widetilde{\mathcal{F}}$}

\addplot+ [
    smooth,
    mark=none,
    error bars/.cd, 
    y fixed,
    y dir=both, 
    y explicit
    ] table [x = x, y = g, y error = eg, col sep = comma] {gate_input.csv};
\addlegendentry{$\mathcal{G}$}

\draw [-,dashed] (b.north) -- (a.south);
\node[pin={[align=right,, text width=7em, pin distance=0.35cm]-90:{\tiny{Both $\mathcal{G}$ and $\widetilde{\mathcal{F}}$ achieved their best performance when $\mathcal{G}$} taking input from $\mathcal{C}$.}}] at (b.south) {};
\end{axis}

\end{tikzpicture}}\caption{Effect of Changing $\mathcal{G}$'s Input.
}\label{fig:gate_inputs}
\end{figure}

\textit{The Performance of $\mathcal{G}$ Increases Significantly along with the Movement of Placing $\mathcal{G}$ Closer to $\mathcal{C}$.} This is expected as there are more layers to be tuned for better performance. 

\textit{The Performance of $\widetilde{\mathcal{F}}$ Increases along with the Movement of Placing $\mathcal{G}$ Closer to $\mathcal{C}$.} This aligns with our previous observation in relation to performance: a better $\mathcal{G}$ can also benefit $\widetilde{\mathcal{F}}$ to achieve better performance. 


Additionally, placing the gate before $\mathcal{C}$ requires a larger $\mathcal{G}$ as the output of an early layer without compression or dropping tends to be larger. A possible workaround is adding additional pooling layers (for example, AvgPool, MaxPool, Conv1x1) to shrink the input. However, this gradually disengages $\mathcal{G}$ from $\widetilde{\mathcal{F}}$. Therefore, it decreases the pre-training benefit for $\widetilde{\mathcal{F}}$.

\textit{The Performances of Both $\widetilde{\mathcal{F}}$ and $\mathcal{G}$ Decrease Once $\mathcal{G}$ Is After $\mathcal{C}$.} The reasons are: (1) the output of the later blocks after $\mathcal{GC}$ is designed to have way more channels that makes the input more noisy for $\mathcal{G}$; (2) the overlapping of $\widetilde{\mathcal{F}}$ and $\mathcal{G}$ is large, which leads to greater competition than cooperation.

\begin{figure}[tb!]
	\centering\includegraphics[width=0.485\textwidth]{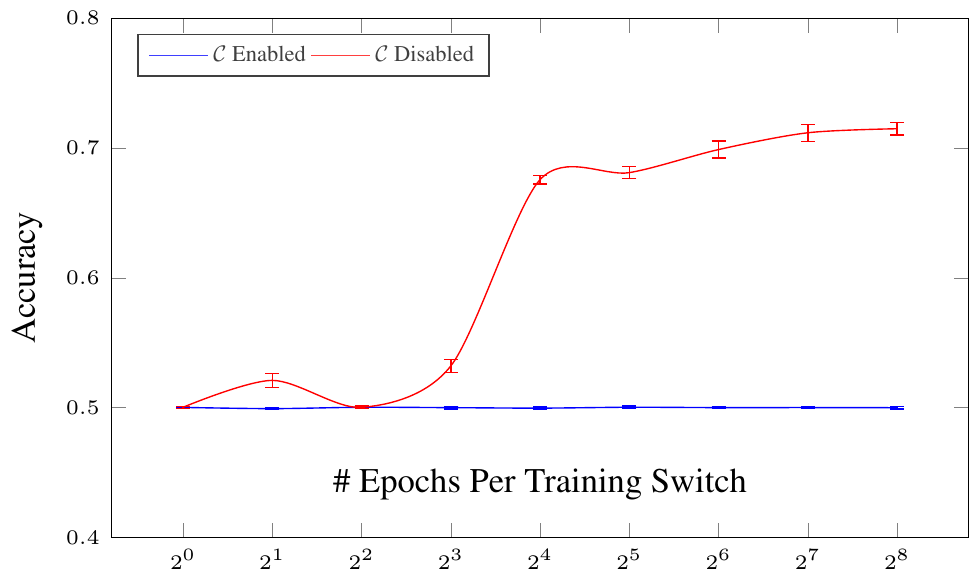}\vspace{-8pt}\caption{Two Stages Training Performances.}\label{fig:two_stages}
\end{figure}
Overall, it is preferable to connect the input layer of $\mathcal{G}$ to the output from $\mathcal{C}$ since it generates the best performance for both $\widetilde{\mathcal{F}}$ and $\mathcal{G}$. Additionally, this also streamlines  the implementation of the GC layer, as the connection is internal.

\subsection{Manually Two Stages Training}
In this section, we are exploring the difference between training $\mathcal{G}$ and $\widetilde{\mathcal{F}}$ end-to-end simultaneously versus manually training them in two stages and then merging together for inference. To simulate the two stages training schema, the gradient flow between the two sub-models is intentionally halted. Subsequently, we alternate between training $\mathcal{G}$ for $N$ epochs and $\widetilde{\mathcal{F}}$ for $N$ epochs, repeating this process until a total of 512 epochs are reached. When the number of epochs per training switch is $2^8$ (256), it forms a hierarchical ensemble model with two sub-models trained in sequence. The results are reported in Figure \ref{fig:two_stages}.

\textit{Disabling $\mathcal{C}$ Performs Slightly Better Than Enabling $\mathcal{C}$.} $\mathcal{C}$ is added on purpose to drop less useful dimensions. Since the gradient is stopped between the two sub models, $\mathcal{C}$ is optimized for $\mathcal{G}$ only. The useful information for $\widetilde{\mathcal{F}}$ is further reduced when $\mathcal{C}$ is in effect. 
Moreover, the results suggest that simultaneously training all components end-to-end is more effective than a two-stage training approach, as the components can work together and optimize the overall performance.


\end{document}